\documentclass[year=26,pdfa]{fmcad}

\usepackage{microtype}
\usepackage{graphicx}
\usepackage{subcaption}
\usepackage{booktabs} 
\usepackage{float}
\usepackage{amsmath}
\usepackage{amssymb}
\usepackage{mathtools}
\usepackage{amsthm}
\usepackage{url}  
\usepackage{pgfplots} 
\usepackage{amsfonts}       
\usepackage{nicefrac}       
\usepackage{xcolor}         
\usepackage{paralist, wrapfig}
\usepackage{comment}
\usepackage{tikz}
\usepackage{multirow} 
\usepackage{multicol}
\usepackage{tikz-cd}
\usepackage{siunitx}
\usepgfplotslibrary{groupplots}
\pgfplotsset{compat=1.18}
\usetikzlibrary{shapes,arrows,positioning,calc}
\usetikzlibrary{decorations.markings,decorations.pathmorphing, calligraphy, patterns}
\usetikzlibrary{decorations.pathreplacing, backgrounds, fit}
\usepackage[capitalize,noabbrev]{cleveref}
\newif\ifextendedversion\extendedversiontrue
\usepackage{balance}
\newtheorem{definition}{Definition}
\newtheorem{problem}{Problem}
\newtheorem{assumption}{Assumption}
\newtheorem{lemma}{Lemma}
\newtheorem{theorem}{Theorem}
\usepackage{thmtools}
\usepackage{thm-restate}

\newcommand\reals{\mathbb{R}}
\renewcommand\vec[1]{\mathbf{#1}}
\newcommand\vx{\vec{x}}
\newcommand\vy{\vec{y}}
\newcommand\vz{\vec{z}}
\newcommand\va{\vec{a}}
\newcommand\vb{\vec{b}}
\newcommand\vl{\vec{\ell}}
\newcommand\vu{\vec{u}}
\newcommand\fhat{\hat{f}}
\newcommand\err{\mathsf{e}}
\newcommand\nat{\mathbb{N}}
\newcommand\psihat{\widehat{\psi}}
\newcommand\tupleof[1]{\left\langle #1 \right\rangle}

\newcommand\nn{\mathsf{n}}
\newcommand\decor[3]{#1^{(#2)}_{#3}}
\newcommand\Phihat{\widehat{\Phi}}
\newcommand\vzhat{\widehat{\vz}}
\newcommand\phihat{\widehat{\phi}}
\newcommand\lip{\Lambda}

\newcommand\MO{\mathsf{MO}}
\newcommand\ck[1]{\textcolor{blue}{CK: #1}}
\usepackage[normalem]{ulem} 
\newcommand\yhat{\hat{y}}

\newcommand\hatz{\mathbf{\hat{z}}}
\newcommand\ve{\mathbf{e}}

\newcommand\relu{\mathsf{RLu}}

\begin{document}
\title{Optimized Piecewise Affine Abstractions of Neural Networks with Learnable Activation Functions}
\author{%
\IEEEauthorblockN{%
Noah Schwartz*\IEEEauthorrefmark{2}\,,
Chandra Kanth Nagesh*\IEEEauthorrefmark{2}\,\orcid{0000-0003-2122-396X},
Sriram Sankaranarayanan\IEEEauthorrefmark{2}\,\orcid{0000-0001-7315-4340}\\
Ramneet Kaur\IEEEauthorrefmark{3},
Tuhin Sahai\IEEEauthorrefmark{3},
Susmit Jha\IEEEauthorrefmark{4}}
\IEEEauthorblockA{%
\IEEEauthorrefmark{2} University of Colorado Boulder, Boulder, CO, USA. Email: first.lastname@colorado.edu}
\IEEEauthorblockA{%
\IEEEauthorrefmark{3}SRI International, Menlo Park, CA, USA. Email: first.lastname@sri.com}
\IEEEauthorblockA{%
\IEEEauthorrefmark{4}Defense Advanced Research Projects Agency (DARPA), Arlington, VA, USA. Email: first.lastname@darpa.mil}
}

\maketitle

\begin{abstract}
We present a generalized framework for the range verification of neural networks featuring non-linear activation functions. Our approach first constructs an ``optimized piecewise affine abstraction" of the network that replaces each non-linear activation function by a piecewise affine (PWA) function plus a bounded error. Such PWA functions are readily amenable to existing neural network verification techniques using specializations of linear arithmetic SMT solvers and mixed-integer optimization approaches. However, there are infinitely many ways to abstract each node, with a natural tradeoff between the number of pieces used, the global error bound, and the complexity of the resulting verification problem. We propose a dynamic programming (DP) algorithm to systematically compute the optimized PWA abstraction for general activation functions, guaranteeing tighter output bounds. The algorithm combines a local DP approximation at each node with a global error bound, yielding a variant of the knapsack problem for deciding how to allocate a fixed budget on the total number of pieces across units so as to minimize the worst-case error bound between the network and its approximation. Although the knapsack problem is itself NP-hard, we can use pseudo-polynomial DP algorithms as well as approximation schemes to solve it efficiently. Crucially, our approach is broadly applicable to diverse networks consisting of non-linear activations, including standard Multi-Layer Perceptrons (MLPs) and recently proposed architectures such as Kolmogorov-Arnold Networks (KANs). Over a series of KAN benchmarks spanning 20 to 22,000 parameters, our approach yields output bounds that are consistently of smaller width than uniform PWA allocation. The overall time taken is roughly comparable while the overhead for computing the optimized abstraction is subsumed by the time taken to compute output bounds.
\end{abstract}

\IEEEpeerreviewmaketitle

\section{Introduction}\label{sec:intro}

We examine piecewise affine (PWA) abstractions for solving range verification problems for feedforward neural networks whose nodes (neurons) feature a wide class of nonlinear and ``learnable'' activation functions.  Range verification involves computing bounds on the outputs of the network, given a set of inputs~\cite{Liu+Others/2021/Algorithms,albarghouthi-book}. Range verification serves as a primitive for verifying properties of neural networks for applications such as control of safety-critical autonomous systems~\cite{Irfan+Others/2020/Towards,katz_barrett_dill_julian_kochenderfer_2017} and robustness of networks to input perturbations~\cite{tjeng_xiao_tedrake_2017,Casadio+Others/2022/NEural}. Commonly used neural networks typically employ activation functions such as sigmoid, ReLU, tanh, Leaky ReLU and GeLU~\cite{DUBEY202292}. These functions are fixed as part of the network architecture and typically do not change as the network is trained. Recently proposed network models such as \emph{deep spline networks}~\cite{Pakshal+Others/2020/Learning,Aziznejad+Others/2020/Deep} and Kolmogorov-Arnold Networks~\cite{li2024kolmogorovarnold,Liu+Others/2025/KAN} employ ``free-form'' activation functions modeled using B-splines or radial basis functions which can change during training time in order to better fit the data.
As a result, such networks have highly nonlinear activation functions  that are not fixed \emph{a priori}. At the same time, commonly used approaches for solving  neural network verification problems have relied on the activation functions of the nodes in the network being piecewise affine (PWA) functions~\cite{katz_barrett_dill_julian_kochenderfer_2017,tjeng_xiao_tedrake_2017} or being abstracted by a PWA function with known error bounds. 
However, the strategy for approximating activation functions by PWA functions is often fixed a priori without consideration of the specific neural network instance.

In this paper, we observe that the choice of a PWA abstraction is, in itself, a decision problem: (a) how many pieces do we use to approximate a given unit in the NN? (b) how does our choice of pieces affect the overall error between the function represented by the NN and its PWA approximation? and (c) how do we optimally choose the number of pieces at each unit locally so that the global error estimate can be minimized?  Our framework proposed in this paper provides answers to these questions through dynamic programming. 

\begin{figure*}[t]
    \centering
    \resizebox{0.6\linewidth}{!}{\begin{tikzpicture}[ node distance = 0.75cm,
  phi/.style={rectangle, draw, text centered },
  summ/.style={circle, draw, fill=red!20, text centered},
  input/.style={circle, draw, fill=black, minimum size=0.3pt, inner sep=0.3pt},
  connection/.style={draw, -latex'}
]

\begin{scope}
    \node at (-6.5,0.75)[draw=black, fill=yellow!20]{ $1$};
    \begin{scope}[xshift=-4.5cm]
    \draw (-1.2, -0.2) rectangle (1.2,1.2);
    \draw [<->, thick] (0, -0.2) -- (0,1.2);
    \draw [<->, thick] (-1.2, 0) -- (1.2,0);
    \draw[blue, ultra thick] plot [smooth, tension=1] coordinates {(-1,0) (-0.8, 0.2) (-0.5, 0.5) (0.0, 1.0) (0.5, 0.7) (0.7, 0.3) (1.0, 0.0) };
    \end{scope}

    \begin{scope}[xshift=-1cm, yshift =0.5cm]
    \draw (-1.2, -0.2) rectangle (1.2,1.2);
    \draw [<->, thick] (0, -0.2) -- (0,1.2);
    \draw [<->, thick] (-1.2, 0) -- (1.2,0);
    \draw[line width=10pt, draw=blue!10, opacity=0.5] (-1, 0) -- (0.1, 1.0) -- (1,0);
    \draw[blue, ultra thick, dashed] plot [smooth, tension=1] coordinates {(-1,0) (-0.8, 0.2) (-0.5, 0.5) (0.0, 1.0) (0.5, 0.7) (0.7, 0.3) (1.0, 0.0) };

    \draw[thick, red] (-1, 0) -- (0.1, 1.0) -- (1,0);

    \node at (0,-.4) { 2 pieces};
    \node at (0,-.7) { error $= 4.0$};

    \end{scope}

    \begin{scope}[xshift=1.6cm, yshift =0.5cm]
    \draw (-1.2, -0.2) rectangle (1.2,1.2);
    \draw [<->, thick] (0, -0.2) -- (0,1.2);
    \draw [<->, thick] (-1.2, 0) -- (1.2,0);
    \draw[line width=5pt, draw=blue!10, opacity=0.5] (-1, 0) -- (-0.2, 0.9) -- (0.2, 1.0) --  (1,0);
    \draw[blue, ultra thick, dashed] plot [smooth, tension=1] coordinates {(-1,0) (-0.8, 0.2) (-0.5, 0.5) (0.0, 1.0) (0.5, 0.7) (0.7, 0.3) (1.0, 0.0) };

    \draw[thick, red] (-1, 0) -- (-0.2, 0.9) -- (0.2, 1.0) --  (1,0);

    \node at (0,-.4) { 3 pieces};
    \node at (0,-.7) { error $=1.2$};

    \end{scope}

    \begin{scope}[xshift=5cm, yshift =0.5cm]
    \draw (-1.2, -0.2) rectangle (1.2,1.2);
    \draw [<->, thick] (0, -0.2) -- (0,1.2);
    \draw [<->, thick] (-1.2, 0) -- (1.2,0);
    \draw[line width=2pt, draw=blue!10, opacity=0.5]  (-1,0) --  (-0.8, 0.2) -- (-0.5, 0.48) --  (-0.2, 0.9) -- (0.0, 1.0) -- (0.1, 1.03) --  (0.3, 0.9) -- (0.5, 0.7) -- (0.7, 0.3) -- (1.0, 0.0);
    \draw[blue, ultra thick, dashed] plot [smooth, tension=1] coordinates {(-1,0) (-0.8, 0.2) (-0.5, 0.5) (0.0, 1.0) (0.5, 0.7) (0.7, 0.3) (1.0, 0.0) };
    \draw[thick, red] (-1,0) --  (-0.8, 0.2) -- (-0.5, 0.48) --  (-0.2, 0.9) -- (0.0, 1.0) -- (0.1, 1.03) --  (0.3, 0.9) -- (0.5, 0.7) -- (0.7, 0.3) -- (1.0, 0.0);

    \node at (0,-.4) { 10 pieces};
    \node at (0,-.7) { error $=0.02$};

    \end{scope}

    \node at (3.35,0.5) {\large $\cdots$};
    \draw [decorate, line width=1pt, decoration = {calligraphic brace}] (-2.5,-0.3) -- (-2.5,1.7);
    \draw [decorate,line width=1pt, decoration = {calligraphic brace}] (6.5,1.7) --  (6.5,-0.3);

    \begin{pgfonlayer}{background}
    \draw[draw=black, thin] (-6.5,2) rectangle (7, -0.5);
    \end{pgfonlayer}
\end{scope}

\begin{scope}[yshift=-3.0cm]
    \node at (-6.5,0.75)[draw=black, fill=yellow!20]{ $2$};

    \node[input](n0i)  at (-6, 2) {}; 
    \node[input, below=1cm of n0i](n1i)  {}; 
    \node[input, below=1cm of n1i](n2i)  {}; 
    \node[input, below=1cm of n2i](n3i)  {}; 

    \node[input, right=0.6cm of n0i](n0)   {}; 
    \node[input, right=0.6cm of n1i](n1)  {}; 
    \node[input, right=0.6cm of n2i ](n2)  {}; 
    \node[input, right=0.6cm of n3i](n3)  {}; 

    \path[->] (n0i) edge node[above]{$x_1$} (n0)
    (n1i) edge node[above]{$x_2$} (n1)
    (n2i) edge node[above]{$x_3$} (n2)
    (n3i) edge node[above]{$x_4$} (n3);

    \node[phi, right=0cm of n0, name=phi11]{\scriptsize $\decor\psi{1}{1}$};
    \node[phi, right=0cm of n1, name=phi12]{\scriptsize $\decor\psi{1}{2}$};
    \node[phi, right=0cm of n2, name=phi13]{\scriptsize $\decor\psi{1}{3}$};
    \node[phi, right=0cm of n3, name=phi14]{\scriptsize $\decor\psi{1}{4}$};

    \node[summ, below right=0cm and 0.6cm  of phi11, name=s1]{$\sum$};
    \node[summ, below right=0cm and 0.6cm  of phi13, name=s2]{$\sum$};

    \path[->] (phi11) edge (s1) 
    (phi12) edge (s1)
    (phi13) edge (s2)
    (phi14) edge (s2); 

    \node[phi, right=0.3cm of s1, name=phi21]{\scriptsize $\decor\psi{2}{1}$};
    \node[phi, right=0.3cm of s2, name=phi22]{\scriptsize $\decor\psi{2}{2}$};
    \node[summ, below right=0.5cm and 0.3cm of phi21, name=s3]{$\sum$};
    \node[phi, right=0.3cm of s3, name=phi31]{\scriptsize $\decor\psi{3}{1}$};
    \node[input, name=o1, right=0.3cm of phi31]{};

    \path[->] (s1) edge (phi21)
    (s2) edge (phi22)
    (phi21) edge (s3)
    (phi22) edge (s3)
    (s3) edge (phi31)
    (phi31) edge node[above]{$y$} (o1);
    \draw [decorate, line width=1pt, decoration = {calligraphic brace}] (0.5,2) --  (0.5,-1);
    \node at (3.7,0.5) { \large $ \begin{array}{rcl} 
    \err & = &  \max_{\vx \in \reals^4} \ | f_N(\vx) - \fhat_N(\vx) | \\ 
    & \leq  & 2.5 \decor\err{1}{1} + 3 \decor\err{1}{2} + \decor\err{1}{3} +  \decor\err{1}{4} \\
    && + 1.5 \decor\err{2}{1} + 0.6 \decor{\err}{2}{2} + 0.2 \decor\err{3}{1} \\ 
    \end{array}$};

    \begin{pgfonlayer}{background}
    \draw[draw=black, thin] (-6.5,-1.5) rectangle (7, 2.5);
    \end{pgfonlayer}
\end{scope}

\begin{scope}[yshift=-6.7cm]
    \node at (-6.5,0.75)[draw=black, fill=yellow!20]{ $3$};
    \node at (-4.3, 1.7){\scriptsize
    $ \begin{array}{|c|ccc|}  
    \hline 
    \# \text{piece} & 1 &  \cdots & 20 \\  
    \decor\err{1}{1} & 5.0 & \cdots & 0.02\\ 
    \hline
    \end{array}$
    };

    \node at (-4.3, 0.9){\scriptsize
    $  \begin{array}{|c|ccc|}  
    \hline 
    \# \text{piece} & 1 &  \cdots & 20 \\  
    \decor\err{1}{2} & 3.2 &  \cdots & 0.01\\ 
    \hline
    \end{array}$
    };

    \node at (-4.3, 0.4){$\cdots$}; 

    \node at (-4.3, -0.1){\scriptsize
    $  \begin{array}{|c|ccc|}  
    \hline 
    \# \text{piece} & 1 &  \cdots & 20 \\  
    \decor\err{3}{1} & 1.7 &  \cdots & 0.1\\ 
    \hline
    \end{array}$
    };

    \draw [decorate,line width=1pt, decoration = {calligraphic brace}] (-2.3,1.8) --  (-2.3,-0.3);    
    \node[input](i0) at (-2.2, 0.75){};

    \node[rectangle, inner sep=5pt, draw=black, right=0.55cm of i0](n0){\begin{tabular}{c}
    \large\textsc{Knapsack}
    \end{tabular}};

    \node[above=0.5cm of n0, name=i1]{$\delta$};
    \node[input, below=0.7cm of n0, name=i2]{};
    \path[->] (i0) edge (n0)
    (i1) edge (n0)
    (n0) edge (i2);

    \node[below=0cm of i2, name=tab1]{
    \scriptsize
    $ \begin{array}{|c|ccc|}  
    \hline 
    \text{node} & \decor{\psi}{1}{1} &   \cdots & \decor\psi{3}{1} \\ 
    \text{\# pieces} & 5 & \cdots & 6 \\ 
    \hline 
    \end{array}$
    };

    \node at (2,0.75)[draw=black, fill=yellow!20]{ $4$};

    \node at (4.5, 0.4){ $\begin{array}{l}
    \max/\min \;\; y \\ 
    \mathsf{s.t} \\
    \; \text{Constraints for PWA model} \\ 
    \; \text{ + approximation error.}
    \end{array}$  };

    \begin{pgfonlayer}{background}
    \draw[draw=black, thin] (-6.5,-1.4) rectangle (1.7, 2.2);
    \draw[draw=black, thin] (2,-1.4) rectangle (7, 2.2);
    \end{pgfonlayer}

\end{scope}
\end{tikzpicture}}
    \caption{An illustration of the key steps of our approach. 1. We consider multiple possible choices for each unit/activation function; 2. We upper bound the overall network-wide approximation error as a linear combination of individual errors; 3. We solve a knapsack problem to choose an optimized approximation that minimizes the error bound; and 4. We connect our PWA abstraction to existing NN verification solvers, especially the mixed integer approach.}
    \label{fig:approach-flow}
\end{figure*}

The key steps of our approach are illustrated in Fig.~\ref{fig:approach-flow}. Let $f_N: \reals^n \rightarrow \reals$ be the function defined by a NN. We assume single output for the ease of presentation. We construct a PWA approximation $\fhat_N: \reals^n \rightarrow \reals$ as follows: 

\begin{compactenum}
\item We consider a set of possible choices for approximating each unit activation function $\psi$ in the network by  PWA functions $\psihat_1, \ldots, \psihat_P$ with increasing number of pieces and decreasing error. We adapt a classic DP algorithm first proposed by Bellman and Roth for this purpose~\cite{Bellman+Roth/1969/Curve,Bellman/1961/Approximation}. We construct a ``trade-off'' table for each unit based on the error $\err_j = \max_{z} | \psi(z) - \psihat_j(z)|$, as the number of pieces $j$  varies.
\item \emph{Error Analysis:} We bound the worst case error $\err_{\max} = \max_{ \vx \in X} |f_N - \fhat_N|$ between the NN and its PWA approximation as a function of the errors made at each individual unit. Theorem~\ref{thm:kan-overall-error} of the paper shows that this error bound can be written as a weighted linear combination of the error bounds obtained for each unit. 
\item \emph{Knapsack Formulation:}  We find a surprising connection between the problem of finding an ``optimized'' abstraction and solving a variant of the \emph{knapsack} problem: a combinatorial optimization problem that involves the optimal choice of items that maximize value while constraining total weights to fall below a budget. 
\item \emph{Connecting Back to Known Activation Functions:} Having identified an optimized allocation of the number of pieces and error at each node of the network, we prove that any PWA function can be written as a weighted combination of ReLU units (Theorem~\ref{thm:relu-encoding}). This allows us to connect our PWA abstraction to a host of existing NN verification techniques that specialize to ReLU networks. 
\item \emph{MILP Formulation:} We solve the range verification problem, using  Mixed Integer Linear Programming (MILP), while encoding the error bounds for the PWA approximation  in the MILP model. This allows the verification problem to be translated into a MILP, which can be solved through powerful combinatorial optimization solvers such as Gurobi~\cite{gurobi}, CPLEX~\cite{cplex} and MOSEK~\cite{mosek}. In fact, by translating a PWA function into a weighted sum of ReLU units, our approach may in fact integrate well within existing NN verification tools.

\end{compactenum}

We focus our empirical evaluation on a recently proposed class of feedforward networks with learnable activation functions called Kolmogorov Arnold Networks (KANs)~\cite{li2024kolmogorovarnold,Liu+Others/2025/KAN} that have become popular across many applications in scientific machine learning. Our experiments on KANs trained for four tasks of increasing scale demonstrate that the upfront cost of the knapsack-based abstraction is recovered many times over during verification. {Over a series of KAN benchmarks spanning 20 to 22,000 parameters, our approach yields output bounds that are consistently of smaller width than uniform PWA allocation. The overall time taken is roughly comparable while the overhead for computing the optimized abstraction is subsumed by the time taken to compute output bounds.} However, we  note that the overhead of computing the optimized abstraction can be  significant, especially for smaller networks, when compared to the time taken to solve the overall range verification problem. That said, the abstraction can be solved once and reused for multiple verification instances. This is often the case for applications such as certifying robustness of neural networks against adversarial perturbations wherein the certification is performed over ranges that arise from a large set of test examples~\cite{SinghEtAl2019AbstractDomain}.
Also, the use of the optimized abstraction shows gains in the computation time for the verification problem in many instances even when the abstraction overhead is taken into account. 

Due to space limitations, detailed proofs of some results and details of the benchmarks used are presented in appendices that have been posted in Arxiv~\cite{Schwartz+Others/2026/Optimized}. 


\section{Related Work}\label{sec:related}

\subsection{Verification Problems and Their Significance}

This paper focuses on verification problems for neural networks and, in particular, the range analysis problem.  Neural networks are now used in safety-critical applications such as self-driving cars \cite{bojarski_testa_2016}, robotics \cite{gu2017deep}, and smart prosthetics \cite{clark_campbell_amor}. In this context, they suffer from some unique challenges beyond incorrect decisions arising from lack of training data and over-fitting. Adversarial attacks were first reported by Goodfellow et al.~\cite{goodfellow_shlens_szegedy_2015}, where small input perturbations on datasets like MNIST-10 (handwritten digits) led to wrongful classifications with high probability. Since then, several techniques have been developed to train models to prevent such attacks. A detailed survey can be found here \cite{bai2021recent,silva2020opportunities}.

Verification tools have  addressed these issues by proving that a given neural network is robust to certain types of perturbations or synthesizing a counterexample that shows a vulnerability in the network. Existing tools have been focused on feedforward neural networks. They  include Reluplex~\cite{katz_barrett_dill_julian_kochenderfer_2017}, \textsc{Marabou} (SMT-based) \cite{katz_huang_2019}, \textsc{CROWN} (uses convex relaxations) \cite{zhang_weng_chen_hsieh_daniel_2018},  \textsc{Sherlock} (MILP-based) \cite{dutta_chen_jha_sriram_sankaranarayanan_tiwari_2019,Dutta+Others/2018/Output} and others~\cite{lomuscio_maganti_2017} (see~\cite{Huang2020Survey,Liu+Others/2021/Algorithms,albarghouthi-book} for a detailed survey). These tools reduce the complex verification question to  range verification, the precise problem being tackled in this paper. 
They have been used to certify safety of an aircraft collision avoidance system~\cite{Irfan+Others/2020/Towards,katz_barrett_dill_julian_kochenderfer_2017}, prove bounds on inputs to control systems~\cite{Majid+Others/2022/Safe}, study the sensitivity of cyber-physical models to perturbations~\cite{Kushner+Sankaranarayanan+Breton/2020/Conformance} and certify robustness to certain types of adversarial attacks~\cite{tjeng_xiao_tedrake_2017,Singh+Others/2019/Beyond}. 

\subsection{NNs with Learnable Activation Functions}

Neural networks with  learnable ``free form'' activation functions such as deep-spline networks have been shown to be advantageous over those with fixed activation functions \cite{Aziznejad+Others/2020/Deep,Pakshal+Others/2020/Learning} in terms of being able to learn more succinct and accurate function representations but at the cost of increased training time. More recently,  Kolmogorov Arnold Networks (KANs) also replace fixed nonlinear activation functions with general univariate functions that can change during the training process. This leads to networks that are more parameter-efficient than MLPs, while still maintaining competitive or superior performance~\cite{Liu+Others/2025/KAN}.  KANs have been shown to achieve better accuracies when compared with larger MLPs for the same learning task~\cite{howard_jacob_murphy_heinlein_stinis_2024,toscano_wang_karniadakis_2024}. Applications  include time-series predictions~\cite{genet2024tkan}, vision \cite{mahara2024dawn} and physics-based learning \cite{abueidda2025deepokan}. Due to the nonlinear and free-form nature of these activation functions, these networks are not amenable to existing range verification approaches beyond those based on simple interval bound propagation~\cite{moore2009introduction}, which suffers from the well-known wrapping effect and consequently tends to yield very coarse bounds on networks of this kind.

\subsection{Neural Network Abstraction}

 Abstracting a neural network with activation functions such as tanh, sigmoid and gelu by means of a PWA function has been investigated as part of most range verification approaches~\cite{albarghouthi-book}. 
 Pulina and Tacchella  were one of the first to propose a form of piecewise affine abstractions by replacing each nonlinear activation function by a series of intervals~\cite{Pulina+Tacchella/2012/Challenging}. Their abstraction of nonlinear activation functions subdivides the domain uniformly using a single width parameter $p$ and uses the fact that commonly used activation functions (sigmoid and tanh) are monotonically increasing functions to bound the value of the function in each subinterval. If a given large value of the width parameter is unable to prove the property, the refinement process lowers the width to produce a finer grained abstraction.  In contrast, our work here uses a PWA approximation that is optimized in two senses: we use Bellman's algorithm at each node to find the best way to grid the domain into $k$ pieces that minimizes the approximation error~\cite{Bellman/1961/Approximation,Bellman+Roth/1969/Curve}, and we use a different number of pieces at different nodes. While refinement is not part of the approach in this paper, we can implement a refinement loop that is similar in spirit to the work of Pulina and Tacchella by incrementally increasing the total number of pieces used in our abstraction. 
 
 Following Pulina and Tacchella, most subsequent approaches either focus exclusively on networks with piecewise affine activation functions such as ReLU~\cite{lomuscio_maganti_2017,katz_barrett_dill_julian_kochenderfer_2017,Ehlers/2017/Formal,Tran+Othrs/2019/Star}, or employ an \emph{a priori} fixed abstraction that is performed once for each activation function to achieve a desired local error bound (see for example~\cite{dutta_chen_jha_sriram_sankaranarayanan_tiwari_2019}). Accounting for this error yields a sound output range that can be used to establish properties. While such approaches can work for activation functions such as sigmoid and tanh that are well approximated by PWA functions with up to $5$ pieces, they fail for networks that employ B-splines or radial basis functions as their activation functions. 

 Bound propagation approaches such as CROWN~\cite{zhang_weng_chen_hsieh_daniel_2018} or DeepPoly~\cite{Singh+Others/2019/Beyond} can eliminate the need for an upfront abstraction by generating upper and lower bounds for nonlinear activation functions on-the-fly as the analysis is performed. At the same time, such approaches can yield coarse bounds if the input bounds are large and the activation function is not monotone. The recent work of  Shi et al.~\cite{Shi+Others/2025/Neural} improves the handling of nonlinear activation functions such as tanh, sinh and variants of ReLU using a generalized branch-and-bound approach that iteratively branches nonlinear activation functions at carefully chosen points, and uses the bound propagation approach of alpha-beta-CROWN~\cite{Wang+Others/2021/BetaCROWN}. Shi et al.\ assume a pre-optimized set of branching points for their approach for a fixed library of commonly occurring nonlinear functions and choose a single branching point on the fly, whereas our approach computes a PWA approximation upfront for a given error tolerance $\delta$ through dynamic programming. In a sense, the contributions of our work can likely improve the approach of Shi et al.\ by providing a principled approach for choosing branching points.

 The work of Ivanov et al.\ uses a creative approach that encodes neural networks as nonlinear differential equations and converts neural network verification problems to hybrid systems verification problems~\cite{IvanovEtAl2019Verisig}. While activation functions such as tanh and sigmoid can indeed be seen as satisfying a differential equation, it is not clear how arbitrary activation functions can be encoded in this framework. 

\subsection{Verification for KANs and B-Spline Networks}
To the best of our knowledge, few publications have explored the verification of KANs, and the field remains relatively unexplored. The related problem of robustness of  KANs to adversarial perturbation has been investigated: Heiderich et al.~\cite{heiderich2025training} present a comparative analysis of KANs and MLPs in the context of adversarial robustness, concluding that standard KAN architectures are similarly vulnerable and introducing a new training method.
\cite{polomolina2024monokancertifiedmonotonickolmogorovarnold} modify the KAN architecture by using Cubic Hermite splines in place of B-Splines to guarantee monotonicity: the range analysis problem for such networks is solved simply by evaluating the network at two corner-points of the input hyper-rectangle.  Additional efforts include the works of \cite{shen2025reduced} and \cite{zeng2024kan}, where they similarly highlight the importance of adversarial robustness and study techniques to improve the effectiveness of KANs against added noise. Our approach is the first to consider the problem of systematically computing PWA abstractions of KANs using a reduction to a variant of the knapsack problem. 

\section{Preliminaries}\label{sec:range-analysis-problem}

In this section, we will formally define the neural network model and provide some key assumptions on the activation functions so that they can be approximated by a PWA function with finitely many pieces. 

\noindent\textit{Notation:} We will represent vectors $\vec{x}, \vec{y}, \vec{z}$ using boldface notation. For a vector $\vec{z}$ its $j^{th}$ component will be written $z_j$. For vectors $\vx, \vy \in \reals^n$, we define $\vz = \vx \odot \vy$ (the Hadamard product) of two vectors as the $n$ dimensional vector such that $z_i = x_i y_i$.

\begin{definition}\label{def:general-nn}
A general feedforward neural network with $K \geq 0$ hidden layers with $n$ inputs and $m$ outputs is defined by 
\begin{compactenum}
\item Weight matrices $W_1, \ldots, W_{K}, W_{K+1}$,  wherein each $W_i$ has dimensions $n_i \times n_{i-1}$ with $n_0 = n$ and $n_{K+1} = m$, and
\item Layerwise activation functions $\sigma_1, \ldots, \sigma_K$ wherein $\sigma_{i}: \reals^{n_{i}} \rightarrow \reals^{n_i}$ defines the nonlinear transformation at each hidden layer, and is itself a vector of $n_i$ individual unit-level activation functions $\psi_{i,j}: \reals \rightarrow \reals$.
\[ \sigma(\vec{z}) = \left(\begin{array}{c} 
\psi_{i,1}(z_1) \\ 
\vdots \\ 
\psi_{i, n_i}(z_{n_i}) \\ 
\end{array}\right)\]
\end{compactenum}
The function computed by the network is given by 
\[ f_N(\vec{x}) = W_{K+1} \times \sigma_K( W_K \times \cdots \times \sigma_2( W_2 \times \sigma_1( W_1 \times \vec{x}))) \,.\]
\end{definition}

\begin{assumption}[Neural Network Input Assumption]\label{assum:nn}
We assume that the inputs to the neural network fall inside a set $\vx \in X$ wherein $X = [-M_1, M_1] \times \cdots \times [-M_n, M_n]$ for fixed (but possibly large) bounds  $M_1, \ldots, M_n \geq 0$.
\end{assumption}

\begin{assumption}[Assumptions on $\psi$]\label{assum:psi}
    We will assume that each unit level activation function 
    $\psi: \reals \rightarrow \reals$ satisfies the following:
    \begin{compactenum}
	\item $\psi(z)$ is continuous and differentiable.
	\item  There is a procedure $\textsc{DerivativeInterval}(\psi, a, b)$ that given input interval $z \in [a, b]$, outputs an interval $[\ell, u]$ such that 
    $[\ell, u] \supseteq \left\{ \frac{d\psi}{dz}\ |\ z \in [a, b]\right\}$. Also, there exists an error tolerance $\epsilon > 0$ such that:
	\begin{inparaenum}
		\item  $\ell \leq \min_{z \in [a, b]} \frac{d\psi}{dz} \leq \ell + \epsilon$,
		\item $u - \epsilon \leq \max_{z \in [a,b]} \frac{d\psi}{dz} \leq u$.
	\end{inparaenum}
    The \textsc{DerivativeInterval} procedure will help us sample derivatives of $\psi$ at discrete points and reason about how the function behaves between these discrete points. 
    \end{compactenum}
\end{assumption}

\begin{lemma}\label{lemma:unit-input-bound}
As a consequence of Assumptions~\ref{assum:nn} and ~\ref{assum:psi}, for each unit $\psi_{i,j}$ there exists a constant $L_{i,j} \geq 0$ such that if the input to the network is $\vx \in X$, then the input to the unit lies in the interval  $z \in [-L_{i,j}, L_{i,j}]$.
\end{lemma}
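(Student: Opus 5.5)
The plan is an induction over the layers of the network in Definition~\ref{def:general-nn}, propagating interval bounds forward. The essential fact is that every $\psi_{i,j}$ is continuous by Assumption~\ref{assum:psi}. Hence on any compact interval it attains a finite maximum and minimum, so it maps bounded sets to bounded sets. Together with the compactness of $X$ from Assumption~\ref{assum:nn}, this is all that is needed.

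\emph{Base case (layer $1$).} The input to unit $(1,j)$ is $z = \sum_k (W_1)_{j,k} x_k$. For $\vx \in X$ we have $|x_k| \leq M_k$, so
\[ |z| \leq \sum_k |(W_1)_{j,k}|\, M_k \,. \]
We set $L_{1,j}$ equal to this right-hand side, which is finite and nonnegative.

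\emph{Inductive step.} Suppose that for layer $i$ every unit input satisfies $z \in [-L_{i,j}, L_{i,j}]$ whenever $\vx \in X$.
\begin{compactenum}
\item Since $\psi_{i,j}$ is continuous on the compact interval $[-L_{i,j}, L_{i,j}]$, the extreme value theorem gives a finite bound
\[ B_{i,j} = \max_{|z| \leq L_{i,j}} |\psi_{i,j}(z)| \,. \]
\item The input to unit $(i+1,j)$ is $\sum_k (W_{i+1})_{j,k}\, \psi_{i,k}(z_k)$. Its absolute value is therefore at most $\sum_k |(W_{i+1})_{j,k}|\, B_{i,k}$, and we take this as $L_{i+1,j}$.
\end{compactenum}
By induction on $i = 1, \ldots, K$, every hidden unit receives a constant $L_{i,j} \geq 0$ with the required property.

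\emph{Main obstacle.} The argument is routine, and the only point needing care is step 1 of the inductive step, which is where continuity of $\psi$ is essential. If we want a constructive bound rather than an existential one, we can use the derivative bound from \textsc{DerivativeInterval}. Applied to $[-L_{i,j}, L_{i,j}]$, it returns $[\ell, u]$, and the mean value theorem then gives
\[ |\psi(z)| \leq |\psi(0)| + \max(|\ell|, |u|)\, L_{i,j} \,. \]
This makes the constants computable, which matches how the lemma is presumably used in the rest of the paper.
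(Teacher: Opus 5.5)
Your proof is correct and is essentially the paper's argument made explicit. The paper notes only that continuity of each $\psi_{i,j}$ maps compact sets to compact (hence bounded) sets, starting from the compact box $X$; your layer-by-layer induction with the extreme value theorem and the triangle inequality unrolls exactly this, and it yields the interval-propagation constants the paper mentions right after the lemma (your \textsc{DerivativeInterval} remark is a valid but unnecessary way to make them computable).
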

\begin{proof}
Proof is a simple consequence of the continuity of each activation function which guarantees that the image of a compact (closed and bounded) set remains compact. 
\end{proof}

In practice, the domain $[-L_{i,j}, L_{i,j}]$ for each unit in the network can be over-approximated by a standard interval analysis carried out over the entire network~\cite{DBLP:journals/corr/abs-1810-12715}. 

These assumptions are satisfied for commonly used activation functions including gelu, tanh, sigmoid as well as learnable activation functions including  B-Splines~\cite{Liu+Others/2025/KAN,torchkan}, Pad\'e approximations~\cite{afzal_2024}, Chebyshev polynomials \cite{ss_ar_r_kp_2024}, Gaussian processes~\cite{Chen/2024/Gaussian}, and Fourier Series~\cite{Xu+Others/2024/FourierKAN}.

\begin{definition}[Continuous PWA Function]\label{def:cpwa-function}
A $k$-piece univariate, continuous PWA function (for $k \geq 2$) is represented by a sequence of $k-1$ breakpoints: $\tupleof{(z_1,y_1) \ldots, (z_{k-1}, y_{k-1})}$ and end slopes $m_0, m_{k-1}$. The breakpoints satisfy the ordering constraint wherein $ z_1 <  \cdots < z_{k-1} $ such that, the PWA function $\psihat$ is defined as:
\[ \psihat(z) = \begin{cases}
	y_i + (z - z_i)\frac{y_{i+1} - y_i}{z_{i+1} - z_i} &  \text{if}\ z \in [z_i, z_{i+1}] \\ 
    y_1 + m_0 (z - z_1) & z \leq z_1 \\ 
    y_{k-1} + m_{k-1} (z - z_{k-1}) & z \geq z_{k-1} \\ 
    \end{cases}
\]
where, $i \in \{ 1, \ldots, k-2\}$. Let $m_i := \frac{y_{i+1} - y_i}{z_{i+1} - z_i}$ for $i  \in \{ 1, \ldots, k-2\}$.
\end{definition}

\paragraph{Range Analysis Problem }
The range analysis problem seeks a guaranteed range for the outputs of a network given a set of inputs. Let $f_N: \reals^{n} \rightarrow \reals$ be a NN with $K$ layers as defined in Def.~\ref{def:general-nn}. Suppose we define a range $\vx \in [\vl, \vu] \subseteq X$, wherein $\vl, \vu \in \reals^n$ are vectors and $\vx \in [\vl, \vu]$ denotes the box constraint $\ell_i \leq x_i \leq u_i$ for each component $i \in \{1, \ldots, n\}$. Our goal is to find a ``tight'' range that over-approximates all the possible output values. In other words, we wish to find $\alpha, \beta \in \reals$ with $\alpha \leq \beta$ such that
\begin{equation} \label{Eq:range-analysis-guarantee}
	\forall\ \vx \in [\vl, \vu], f_N(\vx) \in [\alpha, \beta]
\end{equation}

Although $\alpha = -\infty$ and $\beta = \infty$ is a trivial solution, we often seek to find the tightest possible bounds that establish that $[\alpha, \beta] \subseteq [a, b]$ for some user-provided correctness property
\[ \vx \in [\vl, \vu] \Rightarrow f_N(\vx) \in [a, b]\,.\]

\begin{problem}[Range Verification Problem]\label{def:problem-stmt}
Given a KAN $N$ computing a function $f_N: \reals^n \rightarrow \reals$, an input range $\vx \in [\vl, \vu]$, we wish to find  the ``tightest-possible'' range over the output $[\alpha, \beta]$ satisfying Eqs.~\eqref{Eq:range-analysis-guarantee}.
\end{problem}

We can extend Problem~\ref{def:problem-stmt} to  cases wherein the inputs $\vx$ range over a polyhedron $P[\vx]$ since our approach in this paper is based on a MILP formulation and the polyhedral constraint over the inputs can be directly encoded.


\section{Optimized Piecewise Affine (PWA) Approximations for Univariate Functions}\label{sec:optimal-pwa-univariate-fun}

Let us consider a univariate function $\psi(z) = \psi_{i,j}(z)$, which appears as part of the neural network from Def.~\ref{def:general-nn},  satisfying Assumption~\ref{assum:psi}. We will provide an approach to approximate it using a PWA function with a fixed bound on the number of pieces within the interval $z \in [-L, L]$ wherein $L=L_{i,j}$ corresponds to the input domain of this particular unit when the network input domain is the set $X$ (see Assumption~\ref{assum:nn} and Lemma~\ref{lemma:unit-input-bound}).

\paragraph{Domain Discretization} Since the function $\psi$ is not known to us, we will discretize the continuous range $z \in [-L,L]$ into $j_{\max} +1$ intervals using grid points that are $\Delta = \frac{2L}{j_{\max}}$ distance apart, so that each grid point is of the form  $z_j = -L + j \Delta$ for a natural number $j \in \nat$. Here, $j_{\max}$ is a user-specified input, fixed once $L$ and the discretization factor $\Delta$ are chosen (equivalently, $j_{\max} = \nicefrac{2L}{\Delta}$): a larger $j_{\max}$ (smaller $\Delta$) yields a finer grid and hence a smaller discretization error, at the cost of a longer running time for the dynamic program. We evaluate the function at each of these grid points and approximate the function over this finite set of discrete grid points. Later, we will show how to account for the error introduced by the interval between the grid points that is not accounted for in this process. 
The discretized optimal PWA problem is stated as follows:

\begin{problem}[Optimal Univariate PWA Approximation]\label{prob:univariate-pwa}
The problem of optimally approximating  $\psi(z)$ is as follows:
\begin{compactdesc}
\item[Inputs:] An algorithm for $\psi$, limits $-L, L$, discretization factor $\Delta$ and limit on number of pieces $k$.
\item[Output:] Breakpoints $z_1 \leq z_2 \leq \cdots \leq z_{k-1}$, which together with the values $y_i = \psi(z_i)$ define (via Def.~\ref{def:cpwa-function}) a $k$-piece PWA function $\psihat$, such that the discretized error $e(\Delta) = \max_{u \in \{ -L, -L + \Delta, -L + 2 \Delta, \cdots, L\}} \left| \psi(u) - \psihat(u) \right| $ is minimized.
\end{compactdesc}
Note that if two breakpoints $z_j, z_{j+1}$ coincide in the output, we treat them as the single breakpoint.
\end{problem}

Let $\nn(j; \Delta) = -L + j \Delta$. We write $\nn(j) := \nn(j; \Delta)$ whenever $\Delta$ is clear from context, so $\nn(j)$ and $\nn(j; \Delta)$ denote the same grid point.
The optimal univariate PWA approximation can be solved using a dynamic programming algorithm first proposed by Bellman et al.~\cite{Bellman/1961/Approximation,Bellman+Roth/1969/Curve}. The algorithm defines a value function $V(p, j, q)$. Intuitively, $V(p, j, q)$ is the optimal (minimum) worst-case error for approximating $\psi$ over the range $[\nn(p), \nn(j_{\max})]$ using at most $q$ pieces. The first of these pieces is anchored at the left endpoint $\nn(p)$, and the index $j$ ranges over the candidate right endpoints for this first piece that the recursion considers, with $p < j \leq j_{\max}$ and $q \leq k$, with all points in $[\nn(p), \nn(j)]$ treated as belonging to this same first piece. The function $V$ is recursively defined as
\begin{equation}\label{eq:value-function}
	V(p, j, q) = \begin{cases}
	\mathsf{sPE}(\psi, p, j_{\max}), &\text{if } j \geq j_{\max}\\
		\infty, & \text{if }  q \leq 0                                    \\
        \min(\max(\mathsf{sPE}(\psi, p, j),\\\hspace{4.2em}V(j, j+1, q-1)),\\\hspace{2em}V(p, j+1, q)),&\text{otherwise}
	\end{cases}
\end{equation}

The function $\mathsf{sPE} = \mathsf{singlePieceError}(\psi, j_1, j_2)$ measures the error between $\psi(z)$ and the line joining the points $(\nn(j_1), \psi(\nn(j_1)))$ and $(\nn(j_2), \psi(\nn(j_2)))$, over the discretized interval $[\nn(j_1), \nn(j_2)]$. Let $s_{j_1, j_2}$ be the slope of the line which equals $\frac{ \psi(\nn(j_2)) - \psi(\nn(j_1))}{\nn(j_2) - \nn(j_1)}$
\begin{multline*}
    \mathsf{sPE} =  \max_{z \in \{ \nn(j_1),  \ldots, \nn(j_2)\}} \ \left| \psi(z) - \psihat(z)\right| \,,\\ 
     \text{wherein} \ \psihat(z) = \left( \psi(\nn(j_1)) + (z - \nn(j_1)) s_{j_1, j_2} \right) 
\end{multline*}
The dynamic programming formulation upon memoization runs in time $O(j_{\max}^3 k)$ assuming that evaluations of $\psi(z)$ and other arithmetic operations are $O(1)$. The optimal error equals the value $V(0, 1, k)$ and the solution can be recovered in $O(j_{\max}^2 k)$ steps by traversing the table.
Let $\psihat$ be the function obtained as a result of running Bellman's algorithm. 

\paragraph{Discretization vs. Continuous Error}  We consider the gap between the discretized error $e(\Delta) = \max_{z \in \{ -L, -L + \Delta, -L + 2 \Delta, \cdots, L\}} \left| \psi(z) - \psihat(z) \right| $ and error over the entire continuum of values: $e(\psihat, \psi) = \max_{z \in [-L, L]} | \psi(z) - \psihat(z) |$. We will derive a correction term that can be used to account for this gap. This correction can be used to modify the $\mathsf{singlePieceError}$ calculation used in Eq.~\eqref{eq:value-function}.

The derivation of this error is detailed  in
\ifextendedversion Appendix~\ref{app:discretization-correction-proof}
\else
the extended version~\cite{Schwartz+Others/2026/Optimized}
\fi 
and uses the \textsc{DerivativeInterval} procedure that is assumed to exist through Assumption~\ref{assum:psi}. Finally, the existence of this bound  provides us a way to control  $\Delta$ to a ``small enough'' value so that the discretization error can be made to fall below a user-provided limit.

\paragraph{Trade-Off Table} Finally, we use the dynamic programming formulation to yield a tradeoff table that has the following useful information for a given function $\psi(z)$. Let $k_{\max}$ be an upper limit on the number of breakpoints. The tradeoff table stores, for each value $k \in [1, k_{\max}]$, the minimum error $\err(\psi, k)$ obtained by running Bellman's algorithm seeking a function $\psihat_k$ with at most  $k$ pieces and information on the optimal breakpoints. Note that we do not really need to run the dynamic programming $k_{\max}$ times, once for each budget $k \in [1, k_{\max}]$. The recursion in Eq.~\eqref{eq:value-function} only ever decreases the piece-budget argument $q$, by exactly one each time a piece is closed off. Consequently, computing $V(0, 1, k_{\max})$ by memoization necessarily computes and caches $V(0, 1, q)$ for every intermediate budget $q \leq k_{\max}$ along the way. So a single run of Bellman's algorithm with $k = k_{\max}$ already populates the full memo-table, and we can simply read off the value $V(0, 1, k)$ for every $k \in [1, k_{\max}]$ from it, without any additional runs.

\section{Optimized PWA for Entire Network }\label{sec:optimal-pwa-for-entire-kan}

We will now consider how to approximate the entire network into a PWA function, while ensuring that the overall error $\max_{\vx} | f_N(\vx) - \fhat_N(\vx) | \leq \delta$ for a fixed $\delta$. 
Let us suppose we replace each unit in the KAN $\psi_{i,j}$ by a PWA function $\psihat_{i,j}$ incurring an error $\err_{i,j} = \max_{z \in [-L, L]} | \psi(z) - \psihat(z)|$. We will derive a bound for the entire NN for the value $\max_{\vx \in X} | f_N(\vx) - \fhat_N(\vx)|$.
First, we prove an error propagation bound for each unit $\psi_{i,j}$ under inputs $z, \hat{z}$.
\begin{lemma}\label{Lemma:lipschitz-unit}
    For each unit $\psi_{i,j}$, there exists a constant $ \lip_{i,j} \geq 0$ such that for all $z, \hat{z} \in [-L_{i,j}, L_{i,j}]$,
    \begin{equation*}
        |\psi_{i,j}(z) - \psihat_{i,j}(\hat{z}) |  \leq \err_{i,j} + \lip_{i,j} | z - \hat{z}|
    \end{equation*}
\end{lemma}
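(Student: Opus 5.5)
The plan is a triangle-inequality split through an intermediate term, followed by a mean value theorem bound on the Lipschitz part. For $z, \hat z \in [-L_{i,j}, L_{i,j}]$ we insert $\psi_{i,j}(\hat z)$ and write
\[ |\psi_{i,j}(z) - \psihat_{i,j}(\hat z)| \leq |\psi_{i,j}(\hat z) - \psihat_{i,j}(\hat z)| + |\psi_{i,j}(z) - \psi_{i,j}(\hat z)| . \]
The first term is handled by the definition of the unit error. Since $\hat z \in [-L_{i,j}, L_{i,j}]$ and $\err_{i,j} = \max_{z \in [-L_{i,j}, L_{i,j}]} |\psi_{i,j}(z) - \psihat_{i,j}(z)|$, this term is at most $\err_{i,j}$. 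The maximum exists because both $\psi_{i,j}$ (Assumption~\ref{assum:psi}) and the continuous PWA function $\psihat_{i,j}$ (Def.~\ref{def:cpwa-function}) are continuous on a compact interval.

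For the second term we use that $\psi_{i,j}$ is differentiable (Assumption~\ref{assum:psi}). By the mean value theorem there is a point $\xi$ between $z$ and $\hat z$, so $\xi \in [-L_{i,j}, L_{i,j}]$, with
\[ \psi_{i,j}(z) - \psi_{i,j}(\hat z) = \psi_{i,j}'(\xi)(z - \hat z). \]
We then choose the constant explicitly. Call \textsc{DerivativeInterval}$(\psi_{i,j}, -L_{i,j}, L_{i,j})$ to obtain $[\ell, u]$ containing every derivative value on the interval, and set $\lip_{i,j} := \max(|\ell|, |u|) \geq 0$. Then $|\psi_{i,j}'(\xi)| \leq \lip_{i,j}$, so the second term is at most $\lip_{i,j}|z - \hat z|$. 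Adding the two bounds gives the claim.

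The main subtlety is ensuring that $\lip_{i,j}$ is a finite, legitimate constant. The mere existence of a derivative does not make it bounded. The \textsc{DerivativeInterval} procedure, however, returns a finite interval $[\ell,u]$ enclosing the derivative's range, which settles boundedness directly and also makes the constant computable. Using this procedure avoids the need to assume continuous differentiability.

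We should also note that the bound is one-sided in its use of $\err_{i,j}$: we deliberately compare $\psi_{i,j}$ and $\psihat_{i,j}$ at $\hat z$, not at $z$. This is so that the Lipschitz constant concerns the true function $\psi_{i,j}$, whose derivative we control, rather than the PWA surrogate. Finally, both points, and hence $\xi$, must lie in the common domain $[-L_{i,j}, L_{i,j}]$, which is exactly the hypothesis of the lemma.
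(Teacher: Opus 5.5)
Your proof is correct and follows essentially the same route as the paper: the triangle inequality through $\psi_{i,j}(\hat z)$, with one term bounded by $\err_{i,j}$ and the other by a derivative bound times $|z-\hat z|$. The only difference is in the constant: the paper sets $\lip_{i,j} = \max_{z}|\psi'_{i,j}(z)|$ and remarks that \textsc{DerivativeInterval} can upper-bound it, whereas you take $\lip_{i,j} = \max(|\ell|,|u|)$ from \textsc{DerivativeInterval}, which avoids presupposing that the maximum of $|\psi'_{i,j}|$ is attained.
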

\begin{proof}
        Let $\psi=\psi_{i,j}$, $\psihat = \psihat_{i,j}$, $\psi'_{\max} = \max_{z \in [-L_{i,j}, L_{i,j}]} \left|\frac{d\psi}{dz}\right|$ and $\err= \err_{i,j}$. We may write
	\begin{equation*}
	    | \psi(z) - \psihat(\hat{z}) | \leq  |  \psi(z)  - \psi(\hat{z}) |  + |  \psi(\hat{z}) -  \psihat(\hat{z}) | \leq  \psi'_{\max} | z - \hat{z} | + \err
	\end{equation*}
	We obtain the result by setting   $\lip_{i,j} =\psi'_{\max}$. Note that the \textsc{DerivativeInterval} (Assumption~\ref{assum:psi}) function can be used to compute a tight upper bound on $\psi'_{\max}$.
\end{proof}
Consider a NN as in Def.~\ref{def:general-nn} with $K$ layers, input $\vx \in X $ and output $y$.  Let $W_1, \ldots, W_{K}$ be the matrices for the hidden layers and $W_{K+1}$ be the output layer matrix. Also, let $\psi_{i,j}$, the $j^{th}$ activation function at the $i^{th}$ layer be approximated by $\psihat_{i,j}$ with the local approximation error satisfying Lemma~\ref{Lemma:lipschitz-unit}. 
Consider the PWA approximated network obtained by replacing each $\psi_{i,j}$ by $\psihat_{i,j}$.
Let $ \vz_1, \ldots, \vz_{K+1}$ be the layer inputs, with $\vz_1 = \vx$ and $\vz_{i+1} = \sigma_i( W_i \vz_i)$ for $i = 1, \ldots, K$. Similarly, for the PWA approximated network, let $\hatz_i$ represent the corresponding input to the $i^{th}$ hidden layer in the PWA approximated network. Let $\lip_i$ be the vector whose entries correspond to $\lip_{i,j}$ for the activation function $\psi_{i,j}$ from Lemma~\ref{Lemma:lipschitz-unit} and $\ve_{i}$ be the vector whose entries correspond to the approximation error $e_{i,j}$ for the activation function $\psi_{i,j}$.  Let $|W_i|$ represents the matrix whose entries are the absolute values of the corresponding entries in $W_i$ and $|\vz_i - \hatz_i| $ be the vector whose entries consist of the  absolute value of entries in $\vz_i - \hatz_i$.

\begin{lemma}\label{Lemma:layer-approximation}
For $1 \leq i \leq K$, we have $|\vz_{i+1} - \hatz_{i+1} | \leq \lip_i \odot (|W_i| | \vz_i - \hatz_i|) + \ve_i$.
\end{lemma}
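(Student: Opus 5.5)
We prove the inequality componentwise, working one unit $\psi_{i,j}$ of layer $i$ at a time. Fix $1 \leq i \leq K$ and a row index $j \in \{1,\ldots,n_i\}$. By construction, the $j$-th components of the layer outputs are
\[ (\vz_{i+1})_j = \psi_{i,j}\big((W_i \vz_i)_j\big), \qquad (\hatz_{i+1})_j = \psihat_{i,j}\big((W_i \hatz_i)_j\big). \]
Set $z = (W_i\vz_i)_j$ and $\hat z = (W_i\hatz_i)_j$. Apply Lemma~\ref{Lemma:lipschitz-unit} to this pair:
\[ |(\vz_{i+1})_j - (\hatz_{i+1})_j| \leq \err_{i,j} + \lip_{i,j}\,|z - \hat z|. \]

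The second step bounds $|z-\hat z|$ by linearity and the triangle inequality:
\[ |z-\hat z| = \Big|\sum_k (W_i)_{j,k}\big((\vz_i)_k - (\hatz_i)_k\big)\Big| \leq \sum_k |(W_i)_{j,k}|\,|(\vz_i)_k-(\hatz_i)_k| = \big(|W_i|\,|\vz_i-\hatz_i|\big)_j. \]
Since $\lip_{i,j} \geq 0$, multiplying preserves the inequality. Substituting gives the $j$-th component of $\lip_i \odot (|W_i||\vz_i-\hatz_i|) + \ve_i$. Ranging over all $j$ yields the vector inequality.

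The main obstacle is the domain hypothesis of Lemma~\ref{Lemma:lipschitz-unit}: it requires both $z$ and $\hat z$ to lie in $[-L_{i,j}, L_{i,j}]$. Lemma~\ref{lemma:unit-input-bound} gives $z \in [-L_{i,j},L_{i,j}]$ for $\vx \in X$, but $\hat z$ is the pre-activation of the approximated network and may leave that interval. To handle this, I would take $L_{i,j}$ to be the bound obtained by the interval analysis mentioned after Lemma~\ref{lemma:unit-input-bound}, run over both networks. Concretely, that analysis would be widened by the accumulated error bounds, so it also covers the approximated network's pre-activations. The same constants $\lip_{i,j}$ (maximum of $|\psi'|$ on the enlarged interval, computed via \textsc{DerivativeInterval}) and errors $\err_{i,j}$ are then used on this interval.

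Alternatively, I would note that the proof of Lemma~\ref{Lemma:lipschitz-unit} only evaluates $\psihat$ at $\hat z$ and uses the mean value theorem for $\psi$ on the segment between $z$ and $\hat z$. So it suffices that $\err_{i,j}$ and $\lip_{i,j}$ are valid on an interval containing both points. I would state this as a standing convention for the lemma.
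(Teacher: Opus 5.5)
Your proof is correct and is essentially the paper's argument: bound $|W_i\vz_i - W_i\hatz_i|$ entrywise by $|W_i|\,|\vz_i-\hatz_i|$, then apply Lemma~\ref{Lemma:lipschitz-unit} unit by unit, using $\lip_{i,j}\geq 0$. Your extra point is also valid: $\hat z=(W_i\hatz_i)_j$ need not lie in $[-L_{i,j},L_{i,j}]$, which the paper's two-line proof silently assumes, and computing $\err_{i,j}$ and $\lip_{i,j}$ on an enlarged interval containing both pre-activations (built inductively, layer by layer) closes that gap.
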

\begin{proof}
First, we observe that $|W_i \vz_i - W_i \hatz_i | \leq |W_i| | \vz_i - \hatz_i| $. Next, applying Lemma~\ref{Lemma:lipschitz-unit}, we obtain 
\[
|\vz_{i+1} - \hatz_{i+1} | \leq \lip_i \odot (|W_i| | \vz_i - \hatz_i|) + \ve_i
\]
 Recall that for two vectors $\va, \vb$ of the same dimension,  we define $\va \odot \vb$ as the vector whose entries are products of the corresponding entries $a_i b_i$.
\end{proof}

We will now state the main result that casts the overall error of each output of the NN in terms of the approximation error $\err_{i,j}$ at each unit.   $ \vy = f_N(\vx)$ and  $\vec{\hat{y}} = \fhat_N(\vx)$ for some $\vx \in X$. Let $y_l$ and $\hat{y}_l$ be the $l^{th}$ component of the output for $1 \leq l \leq m$.

\begin{restatable}{theorem}{kanoverallerr}
\label{thm:kan-overall-error}
       For each  unit $\psi_{i,j}$ and for each output index $1 \leq l \leq m$ there exists a constant $\decor{S}{l}{i,j} \geq 0$ such that the overall approximation error is
       $ | y_l - \yhat_l | \leq  \sum_{i=1}^K \sum_{j=1}^{n_{i}}  \decor{S}{l}{i,j} \times \err_{i,j}$.
       Here $\decor{S}{l}{i,j}$ denotes sensitivity weight constants (distinct from the layer weight matrices $W_1,\ldots,W_{K+1}$ of Def.~\ref{def:general-nn} and the knapsack budget $W$ of Problem~\ref{MOK-Problem}).
\end{restatable}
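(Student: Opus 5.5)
The plan is to unroll the layerwise recursion of Lemma~\ref{Lemma:layer-approximation} into a closed-form bound, and then read off the sensitivity weights as the coefficients of the unit errors.

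Write $\vec{d}_i := |\vz_i - \hatz_i|$ for $i=1,\ldots,K+1$, taken componentwise. Both networks receive the same input, so $\vz_1 = \hatz_1 = \vx$ and hence $\vec{d}_1 = \vec{0}$. Let $D_i := \mathrm{diag}(\lip_i)$, so that $\lip_i \odot (|W_i|\vec{d}_i) = D_i |W_i| \vec{d}_i$. Define the nonnegative matrices $A_i := D_i |W_i|$. Lemma~\ref{Lemma:layer-approximation} then reads
\[ \vec{d}_{i+1} \leq A_i \vec{d}_i + \ve_i, \qquad 1 \leq i \leq K, \]
with the inequality holding componentwise.

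The key observation is that multiplication by a matrix with nonnegative entries preserves componentwise inequalities. This lets us iterate the recursion by induction on $i$. The claim to prove is
\[ \vec{d}_{i+1} \leq \sum_{r=1}^{i} A_i A_{i-1}\cdots A_{r+1}\, \ve_r , \]
where the empty product (the case $r=i$) is the identity.

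The base case $i=1$ follows from $\vec{d}_1=\vec{0}$. For the inductive step, multiply the hypothesis for $\vec{d}_i$ by $A_i \geq 0$ and add $\ve_i$.

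One point needs care: Lemma~\ref{Lemma:lipschitz-unit} requires both arguments $z,\hat z$ to lie in $[-L_{i,j},L_{i,j}]$, and $\hat z$ is a pre-activation of the approximate network. I would handle this in one of two ways. The first option is to note that the interval analysis used to obtain $L_{i,j}$ can be run on the abstracted network as well, and to take $L_{i,j}$ large enough to cover both networks. The second option is to define $\lip_{i,j}$ as a bound on $|\psi'|$ over the enlarged domain. In either case this is the main (mild) obstacle. The algebra itself is routine.

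For the output layer, $\vy - \vec{\hat y} = W_{K+1}(\vz_{K+1}-\hatz_{K+1})$. This gives
\[ |y_l - \yhat_l| \leq \vec{w}_l^{\top}\vec{d}_{K+1}, \]
where $\vec{w}_l^{\top}$ is the $l$-th row of $|W_{K+1}|$. Substituting the closed form yields
\[ |y_l-\yhat_l| \leq \sum_{r=1}^{K} \vec{w}_l^{\top} A_K\cdots A_{r+1}\ve_r = \sum_{r=1}^K\sum_{j=1}^{n_r} \decor{S}{l}{r,j}\,\err_{r,j}, \]
where
\[ \decor{S}{l}{r,j} := \bigl(\vec{w}_l^{\top} A_K \cdots A_{r+1}\bigr)_j . \]
Each such weight is a sum of products of absolute weights and Lipschitz constants, so it is nonnegative. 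It depends only on the network and on the domain bounds, and not on the chosen approximations, which is exactly what the knapsack formulation needs. This proves Theorem~\ref{thm:kan-overall-error}.
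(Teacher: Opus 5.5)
Your proposal is correct and follows essentially the same route as the paper. Both start from $\vz_1 = \hatz_1 = \vx$, unroll the recursion of Lemma~\ref{Lemma:layer-approximation}, and finish by applying $|W_{K+1}|$ at the output; your closed form $\decor{S}{l}{r,j} = \bigl(\vec{w}_l^{\top} A_K \cdots A_{r+1}\bigr)_j$ and the monotonicity of multiplication by nonnegative matrices make explicit what the paper compresses into ``expanding on this.'' Your caveat that Lemma~\ref{Lemma:lipschitz-unit} needs the approximate pre-activations to lie in $[-L_{i,j}, L_{i,j}]$ is a genuine point the paper glosses over. Note, however, that if you enlarge $L_{i,j}$ based on the abstracted network, $\lip_{i,j}$ and hence $\decor{S}{l}{r,j}$ can depend on the chosen abstraction, which slightly undercuts your closing remark that the weights are independent of it.
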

\begin{proof}
Note that $\vz_1 = \hatz_1 = \vx$ (both networks receive the same input). Therefore, $|\vz_1 - \hatz_1| = 0$. From Lemma~\ref{Lemma:layer-approximation}, we note that $|\vz_{i+1} - \hatz_{i+1}| \leq \lip_i \odot (|W_i| | \vz_i - \hatz_i|) + \ve_i$. Expanding on this, we note that for each entry $|z_{i+1, r} - \hat{z}_{i+1,r}|$ the bound takes the form
\[ |z_{i+1, r} - \hat{z}_{i+1,r}| \leq \sum_{k=1}^i \sum_{j=1}^{n_k} \decor{S}{r}{k,j} e_{k,j} \]
for some positive constants $\decor{S}{r}{k,j}$ that bound the contribution of the error incurred in the PWA approximation of $\psi_{k,j}$ on the $r^{th}$ input to the $(i+1)^{th}$ hidden layer.

Finally, the output $y_l$ is obtained as the $l^{th}$ component of the vector $\vy = W_{K+1} \vz_{K+1}$. Therefore, we obtain the required bound as a weighted linear combination of approximation error for each activation function.
\end{proof}

This bound is a worst-case surrogate and is not guaranteed to be tight. Most of the slack we see comes from Lemma~\ref{Lemma:lipschitz-unit} which uses the Lipschitz constant $\Lambda_{i,j} = \max_{z \in [-L_{i,j}, L_{i,j}]} |\psi'_{i,j}(z)|$ over each unit's entire input range rather than the local slope at the inputs actually encountered, and Lemma~\ref{Lemma:layer-approximation} bounds $|W_i \vz_i - W_i \hatz_i|$ by $|W_i|\,|\vz_i - \hatz_i|$, which throws away any cancellation between positive and negative entries, essentially the same sign-agnostic propagation used in interval arithmetic (Section~\ref{sec:related}). The bound is still useful for the knapsack of Section~\ref{sec:optimalkan-abs}, which only needs the \emph{relative} sensitivity $\decor{S}{l}{i,j}$ across units to be roughly right rather than the absolute error estimate to be tight. Section~\ref{sec:results} suggests that this is sufficient in practice.

\subsection{Optimized NN Abstraction as Knapsack}\label{sec:optimalkan-abs}

Theorem~\ref{thm:kan-overall-error} shows how to calculate the error once we have decided upon a PWA abstraction of each unit $\psi_{i,j}$ yielding the error term $\err_{i,j}$. The constants $\decor{S}{l}{i,j}$ corresponding to each unit and output can be computed using a layer-by-layer error propagation analysis that was presented in the proof of the theorem.  For the remainder of this section, we will assume that the network has a single output, and we write $S_{i,j}$ for the corresponding sensitivity weight $\decor{S}{1}{i,j}$. Alternatively, we may specify a weight vector that combines error bound for each component of the output into an overall weighted error bound to be minimized.

The key problem is therefore to decide how many pieces $p_{i,j}$ to allocate to each unit $\psi_{i,j}$, given a fixed global budget $B$ on the total number of pieces used across the entire network, i.e., $\sum_{i,j} p_{i,j} \leq B$, so as to minimize the resulting overall approximation error $\sum_{i,j} S_{i,j} \times \err_{i,j}(p_{i,j})$. We fix a budget on the total number of pieces, rather than on the error, because it is this number that determines the complexity of the downstream verification problem, especially the MILP formulation that we will provide in Section~\ref{sec:milp-formulation}: for a fixed complexity budget $B$, we then obtain the allocation of pieces to units that makes the best use of it, i.e., that minimizes the resulting approximation error.
We formulate the problem of optimized abstraction as a variant of the well-known knapsack problem called the multi-choice knapsack problem~\cite{Kellerer+Others/2004/MultipleChoice}.

\begin{problem}[Multi-Choice Knapsack Problem]\label{MOK-Problem}
An instance of the multi-choice knapsack problem is defined as follows:
\begin{compactdesc}
\item[Inputs:] We have $N \geq 1$ \emph{options}, wherein each option is a set of  $k \geq 1$ items that are available to be chosen and the items belonging to two different options are disjoint. Furthermore, for each option $i$, we have a table of non-negative weights (listed in ascending order) and values for the items in option $i$:
    \[\begin{array}{|r|rrrr|}
            \hline
            \text{Item}    & \decor{I}{i}{1}   & \decor{I}{i}{2}   & \cdots & \decor{I}{i}{k}   \\
            \hline
            \text{Weights} & \decor{w}{i}{1} < & \decor{w}{i}{2} < & \cdots & < \decor{w}{i}{k} \\
            \text{Value}   & \decor{v}{i}{1}   & \decor{v}{i}{2}   & \cdots & \decor{v}{i}{k}   \\
            \hline
        \end{array}    \]
    We are given a total weight budget $W \geq 0$. We will assume \emph{feasibility}: i.e, $\sum_{i=1}^N \decor{w}{i}{1} \leq W$.
\item[Output:] Choose a set of $N$ items, choosing precisely one item from each option such that (a) the total weight of chosen items remains within the budget $W$ and (b) the total value of the chosen items is maximized.
\end{compactdesc}
\end{problem}

\begin{restatable}{theorem}{moknapsacknpcomplete}\label{theorem:moknapsack-npcomplete}
    The (decision version) of the multi-choice knapsack problem is \textbf{NP}-complete.
\end{restatable}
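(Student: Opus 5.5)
The proof has two parts: membership in \textbf{NP}, and \textbf{NP}-hardness by a reduction from the classic 0/1 knapsack problem. The decision version asks whether, given an instance of Problem~\ref{MOK-Problem} and a target value $T$, we can choose exactly one item from each option with total weight at most $W$ and total value at least $T$.

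\emph{Membership.} A certificate is the list of chosen item indices, one per option. Its length is $O(N \log k)$. We verify it in polynomial time by summing the $N$ chosen weights and values and comparing the sums against $W$ and $T$.

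\emph{Hardness.} We reduce from the decision version of 0/1 knapsack. That problem has items with weights $a_1, \ldots, a_N \geq 0$ and values $c_1, \ldots, c_N \geq 0$, a capacity $C$ and a target $T$, and asks for a subset $S$ with $\sum_{i \in S} a_i \leq C$ and $\sum_{i\in S} c_i \geq T$. It is \textbf{NP}-complete by Karp's list.

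\begin{itemize}
\item \emph{Construction.} For each item $i$ we create option $i$ with two items. The ``skip'' item has weight $w^{(i)}_1 = 0$ and value $v^{(i)}_1 = 0$. The ``take'' item has weight $w^{(i)}_2 = a_i$ and value $v^{(i)}_2 = c_i$. We set $W = C$ and keep the same $T$.
\item \emph{Feasibility.} The assumption $\sum_i w^{(i)}_1 = 0 \leq W$ holds automatically.
\item \emph{Correctness.} Choosing exactly one item per option corresponds bijectively to choosing a subset $S$, namely the options where the take item is chosen. Total weight and total value are preserved under this correspondence, so yes-instances map to yes-instances in both directions.
\end{itemize}

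The main obstacle is that Problem~\ref{MOK-Problem} requires the weights within each option to be \emph{strictly} increasing, which fails when some $a_i = 0$. We handle this in one of two ways.
\begin{itemize}
\item \emph{First fix.} Drop every item with $a_i = 0$ from the instance and subtract its value $c_i$ from the target $T$ in advance. Such an item can always be taken for free in an optimal solution, so this preserves the answer.
\item \emph{Alternative fix.} Scale the instance: use weights $(N+1)a_i + 1$ for take items, weight $0$ for skip items, and budget $(N+1)C + N$. The additive $+1$ terms contribute at most $N < N+1$ in total, so they cannot change which subsets fit within the budget.
\end{itemize}
Either fix keeps the reduction polynomial, since all numbers remain binary-encoded with polynomially many bits. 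Combined with membership, this shows the decision version is \textbf{NP}-complete.
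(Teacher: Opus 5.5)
Your proof is correct and takes the same approach as the paper. Membership is shown by checking a one-item-per-option certificate, and hardness by the same reduction from 0/1 knapsack, giving each option a zero-weight, zero-value ``skip'' item and a ``take'' item. Your additional step enforcing strictly increasing weights when some $a_i = 0$ handles a detail the paper's proof passes over; this step works either by pre-taking zero-weight items or by the $(N+1)a_i + 1$ rescaling, which needs integer weights and capacity.
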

Proof is provided in \ifextendedversion
Appendix~\ref{appendix:proof-theorem3}.
\else
the extended version~\cite{Schwartz+Others/2026/Optimized}.
\fi
Choosing an optimized abstraction for a NN reduces to a multi-choice knapsack problem:
\begin{inparaenum}[(a)]
        \item Each option corresponds to a single unit $\psi_{i,j}$ and thus, there are as many options as the number of units in the KAN;
        \item For each option corresponding to the unit $\psi_{i,j}$, each item corresponds to a choice of the number of pieces $p \in [1, k_{\max}]$ for the piecewise linearization $\psihat_{i,j}$, with weight $\decor{w}{i,j}{p} = p$, the number of pieces itself, and value $\decor{v}{i,j}{p} = -S_{i,j} \times \err_{i,j}(p)$, wherein $\err_{i,j}(p)$ is the minimum error achievable using $p$ pieces for unit $\psi_{i,j}$, obtained from the trade-off table discussed in Section~\ref{sec:optimal-pwa-univariate-fun}. Using more pieces for a unit lowers its error $\err_{i,j}(p)$ and hence raises its value $\decor{v}{i,j}{p}$, at the cost of a larger weight.
        \item The weight budget $W$ of Problem~\ref{MOK-Problem} is the same as the total piece budget $B$.
\end{inparaenum}
Maximizing the total value $\sum_{i,j} \decor{v}{i,j}{p_{i,j}}$ subject to the weight budget $B$ is therefore the same problem as minimizing the total weighted error $\sum_{i,j} S_{i,j} \times \err_{i,j}(p_{i,j})$ subject to the piece budget $B$, as desired.

Let us assume a multi-choice knapsack problem instance with $N$ options, each having $k$ items. The dynamic programming formulation is based on a value function
that we will denote $\MO(i, \hat{W})$ that asks for the optimal (maximum) total value obtainable by selecting items from options $i$ until $N$ (inclusive) with a remaining weight budget $\hat{W}$.
\begin{equation*}
    \MO(i, \hat{W}) = \begin{cases}
		-\infty\;\; \text{if}\ \hat{W} < 0                                                                                   \\
		0 \;\; \text{if}\  i > N \ \land\ \hat{W} \geq 0                                                                    \\
		\max_{j=1}^k \left( \decor{v}{i}{j} + \MO\left(i+1, \hat{W} - \decor{w}{i}{j} \right) \right) \;\; \\
	\end{cases}
\end{equation*}
The optimal answer is obtained by computing $\MO(1, W)$ through memoization.
\begin{theorem}
	The  dynamic programming algorithm for multi-choice knapsack runs in time $O(W N k)$.
\end{theorem}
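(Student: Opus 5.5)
The plan is to count the distinct memo-table entries and then bound the work done per entry, treating each evaluation of $\decor{v}{i}{j}$, $\decor{w}{i}{j}$, each arithmetic operation and each table lookup as $O(1)$. I will also assume the weights $\decor{w}{i}{j}$ and the budget $W$ are non-negative integers, as they are in our application where the weights are numbers of pieces. This integrality is what makes the bound pseudo-polynomial rather than polynomial.

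First I characterize which states $\MO(i, \hat{W})$ are reachable from the root call $\MO(1, W)$. The first argument $i$ only increases, by exactly one per recursive call, so it ranges over $\{1, \ldots, N+1\}$. The second argument starts at $W$ and each call subtracts a non-negative integer weight, so $\hat{W}$ is always an integer at most $W$. Any call with $\hat{W} < 0$ returns $-\infty$ in constant time without recursing, and I will charge it to its caller rather than store it. Hence the memoized (non-trivial) states lie in $\{1, \ldots, N+1\} \times \{0, 1, \ldots, W\}$, which has at most $(N+1)(W+1) = O(NW)$ entries.

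Second, I bound the work per state. Once all sub-results are memoized, a state with $i \le N$ and $\hat{W} \ge 0$ evaluates a maximum over $k$ terms. Each term $\decor{v}{i}{j} + \MO(i+1, \hat{W} - \decor{w}{i}{j})$ costs $O(1)$: one subtraction, one lookup (or a constant-time base case when the index goes negative), and one addition. So each state costs $O(k)$. States with $i = N+1$ cost $O(1)$.

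Multiplying gives $O(NW) \cdot O(k) = O(WNk)$ total time, and recovering the chosen items by retracing the argmax at each level adds only $O(Nk)$. Alternatively, I could present the argument as a bottom-up table filled for $i = N, N-1, \ldots, 1$ and $\hat{W} = 0, \ldots, W$, which makes the count immediate.

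The only real obstacle is justifying that the set of distinct subproblems is $O(NW)$. That requires integer weights: with real-valued weights the reachable values of $\hat{W}$ could be exponentially many. I would state this integrality assumption explicitly, noting that the sorted weights $\decor{w}{i}{1} < \cdots$ and feasibility play no role in the complexity bound.
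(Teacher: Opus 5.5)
Your proposal is correct and takes the same approach as the paper, which gives no written proof but rests on exactly your count of $O(NW)$ memoized states $\MO(i,\hat{W})$ with $O(k)$ work each. The paper also relies on integrality: right after the theorem it remarks that the weights $\decor{w}{i,j}{p} = p$ are natural numbers, so no quantization assumption is needed, which is the hypothesis you make explicit.
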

In the setting of optimized NN abstractions, the weights $\decor{w}{i,j}{p} = p$ are natural numbers by construction, and the weight budget $W$ is instantiated as the total piece budget $B$. Unlike an error-valued budget, this requires no additional quantization assumption. The running time complexity is linear in the network size, since $N$ is the number of units in the network.

\section{Mixed Integer Linear Program (MILP) Encoding}\label{sec:milp-formulation}

In this section, we will briefly discuss how to solve the range verification problem given a PWA approximation of each activation function $\psi_{i,j}$ by $\psihat_{i,j}$ and corresponding error $e_{i,j}$. Our approach is to note that each PWA function $\psihat_{i,j}$ can be written as a finite weighted sum of ReLUs over the domain $[-L_{i,j}, L_{i,j}]$. Therefore, any approach that can encode ReLU NNs into a MILP can be directly adapted to encode the PWA abstraction for the purposes of solving the range verification problem. Other verification approaches  can be used as well after modifying them to account for the approximation error at each unit.  Recall that $\relu(x) = \max(x, 0)$.

\begin{theorem}\label{thm:relu-encoding}
Any continuous $k$-piece PWA function $\psi(z)$ (according to Def.~\ref{def:cpwa-function}) with $k \geq 2$ pieces and $k-1$ breakpoints can be written as a linear combination of ReLU functions: 
\begin{equation}\label{eq:relu-equivalence}
   \psi(z) = \left( \begin{array}{c}
y_1 - m_0 \relu(z_1 - z) +  m_1 \relu(z - z_1) + \\
\sum_{j=2}^{k-1} (m_j - m_{j-1}) \relu(z -z_j) 
\end{array}\right) 
\end{equation}
\end{theorem}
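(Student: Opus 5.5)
The plan is to compare the right-hand side of \eqref{eq:relu-equivalence}, call it $R(z)$, with $\psi(z)$ region by region. Both are continuous, piecewise affine functions whose only possible kinks lie at the breakpoints $z_1 < \cdots < z_{k-1}$. So it suffices to show two things: $R$ and $\psi$ agree at a single anchor point, and they have the same slope on every open interval between consecutive breakpoints (including the two unbounded end intervals). Equality then follows by integrating the common derivative, since two continuous functions on $\reals$ that are affine on the same intervals, with equal slopes there, and equal at one point, coincide. Equivalently, one can argue by induction on the intervals, using continuity at each breakpoint to carry the agreement across.

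First I will evaluate $R$ on the left end region $z \leq z_1$. There every term $\relu(z - z_j)$ with $j \geq 1$ vanishes, because $z \leq z_1 \leq z_j$. Also $\relu(z_1 - z) = z_1 - z$. Hence $R(z) = y_1 - m_0(z_1 - z) = y_1 + m_0(z - z_1)$, which is exactly the left branch of Def.~\ref{def:cpwa-function}. In particular $R(z_1) = y_1 = \psi(z_1)$, which gives the anchor.

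Next, for $z$ in an open interval $(z_r, z_{r+1})$ with $1 \leq r \leq k-2$, or in $(z_{k-1}, \infty)$ with $r = k-1$, I will compute the derivative of $R$ term by term. The term $\relu(z_1 - z)$ is identically $0$ there, so it contributes nothing. The term $m_1 \relu(z - z_1)$ contributes $m_1$. Each term with $2 \leq j \leq r$ is active and contributes $m_j - m_{j-1}$, while the terms with $j > r$ are inactive. The slope is therefore the telescoping sum
\[ m_1 + \sum_{j=2}^{r} (m_j - m_{j-1}) = m_r . \]
By Def.~\ref{def:cpwa-function}, $m_r$ is the slope of $\psi$ on $[z_r, z_{r+1}]$ for $r \leq k-2$ (namely $\frac{y_{r+1}-y_r}{z_{r+1}-z_r}$), and $m_{k-1}$ is the slope of the right end branch. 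Combined with the anchor $R(z_1) = \psi(z_1)$ and continuity of both functions, this yields $R \equiv \psi$ on all of $\reals$, and a fortiori on $[-L_{i,j}, L_{i,j}]$.

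The main obstacle is index bookkeeping rather than any analytic difficulty. The definition overloads $m_{k-1}$ as the right end slope while $m_1, \ldots, m_{k-2}$ are interior slopes, so I must check that the $j = k-1$ term of the sum switches the slope from $m_{k-2}$ to $m_{k-1}$ exactly at the last breakpoint $z_{k-1}$. The degenerate case $k = 2$ also needs care. There the sum is empty and $m_1 = m_{k-1}$ is the right end slope, so $R(z) = y_1 - m_0\relu(z_1 - z) + m_1\relu(z - z_1)$, which visibly matches both branches of Def.~\ref{def:cpwa-function}. I will treat $k = 2$ separately first and then run the telescoping argument for $k \geq 3$. Continuity of $\psi$ at interior breakpoints is built into Def.~\ref{def:cpwa-function}, since consecutive pieces share the point $(z_i, y_i)$, so no further consistency check is needed.
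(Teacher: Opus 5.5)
Your proof is correct and shares the paper's region-by-region case split, but you fix the intercepts by a different mechanism. The paper evaluates the right-hand side $g(z)$ explicitly on each $[z_j, z_{j+1}]$. It telescopes the coefficients of $z$ to $m_j$, and separately telescopes the constant terms via $m_i(z_{i+1}-z_i) = y_{i+1}-y_i$, to reach $g(z) = y_j + m_j(z - z_j)$. You telescope only the slopes, and let the anchor $R(z_1) = y_1$ plus continuity of both sides carry agreement across each breakpoint.

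Your route is shorter and avoids the intercept bookkeeping. That bookkeeping is exactly where the paper's displayed computation slips: the bracketed sum $\Delta_2$ should enter with a plus sign, not a minus sign, although the paper's next line is correct. The price is that the secant-slope definition $m_i = (y_{i+1}-y_i)/(z_{i+1}-z_i)$ enters only implicitly, through continuity of $\psi$ at the interior breakpoints. You rightly note that this continuity is built into Def.~\ref{def:cpwa-function}. The paper's direct evaluation instead exhibits the closed form on every piece without an induction.

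Your explicit treatment of $k=2$, and of the switch to the end slope $m_{k-1}$ at $z_{k-1}$, fills in what the paper dismisses as ``an identical argument''.
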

\begin{proof}
Let $g(z) $ represent the RHS of Eq.~\eqref{eq:relu-equivalence}. We verify that $\psi(z) = g(z)$ by running through the cases $z \leq z_1$ and then each of the cases $ z \in [z_j, z_{j+1}]$ for $j \in [1, \ldots, k-2]$ and $z \geq z_{k-1}$.

First, if $z \leq z_1$, verify that $\psi(z) = y_1 + m_0( z - z_1) = y_1 - m_0 \relu(z_1 - z)$. Also, $\relu(z - z_j) = 0$ for $z \leq z_1$ since $z \leq z_1 < z_j$.

Suppose $z \in [z_j, z_{j+1}]$, then note that $\relu(z_1 - z) = 0$ and
$\relu(z - z_i) = 0$ for all $i \geq j+1$ and $\relu(z - z_i) = (z -z_i) $ whenever $i \leq j$. Therefore, 
\[ \begin{array}{l}
g(z)  - y_1 \\
\; =  m_1 \relu(z - z_1) + \sum_{i=2}^{j} (m_i - m_{i-1}) \relu(z - z_i) \\ 
\; =  m_1 (z - z_1) +  \sum_{i=2}^{j} (m_i - m_{i-1}) (z - z_i) \\ 
\; =  \left( m_1 + (m_2 - m_1) + \cdots + (m_j - m_{j - 1}) \right) z  \\ 
\;\;\;- \underset{\Delta_2}{\underbrace{( m_1 (z_2 - z_1) + \cdots + m_{j-1}(z_{j} - z_{j-1}) )}} - m_j z_j \\ 
\; =  m_j z  - m_j z_j + y_{j} - y_1
\end{array}\]
This is because $m_1 + (m_2 - m_1) + \cdots + (m_j - m_{j-1}) = m_j$ and 
furthermore, $m_1 (z_2 - z_1) = y_2 - y_1$, $m_2 (z_3 - z_2) = y_3 - y_2 $, $\cdots$, $m_{j-1}(z_{j} - z_{j-1}) = y_{j} - y_{j-1}$. Therefore, $\Delta_2 = y_2 - y_1 + y_3 - y_2 + \cdots + y_j - y_{j-1} = y_j - y_1$. Combining, 
$g(z) = y_j + m_j (z - z_j) = \psi(z)$ whenever $z \in [z_j, z_{j+1}]$. An identical argument applies when $z \geq z_{k-1}$ to yield $g(z) = y_{k-1}+ m_{k-1} (z - z_{k-1}) = \psi(z)$. The required identity is thus verified to hold. 
\end{proof}

As a result, the MILP encoding is identical to existing approaches to neural network verification using mixed integer programming~\cite{tjeng_xiao_tedrake_2017,dutta_chen_jha_sriram_sankaranarayanan_tiwari_2019,lomuscio_maganti_2017} with the addition of an extra continuous variable $\epsilon_{i,j}$ at each node to represent the error between the PWA approximation and the original result.  
Due to space limitations, the details of the ReLU encoding are given in \ifextendedversion Appendix~\ref{sec:milp-encoding} \else the extended version~\cite{Schwartz+Others/2026/Optimized}\fi.

Theorem~\ref{thm:relu-encoding} also points to a more general approach to verification. Because every PWA abstraction can be rewritten as a sum of ReLU units, our abstraction is exactly a ReLU network augmented with a bounded, nondeterministic error term $\epsilon_{i,j}$ at each unit. Any verification backend that already handles ReLU networks and admits a bounded slack variable per unit can therefore reason over our abstraction directly. MILP-based approaches handle this natively, as detailed in \ifextendedversion Appendix~\ref{sec:milp-encoding} \else the extended version\fi . SMT-based tools such as Reluplex~\cite{katz_barrett_dill_julian_kochenderfer_2017} and Marabou~\cite{katz_huang_2019} can do so as well, by treating each $\epsilon_{i,j}$ as an additional bounded real-valued variable. Bound-propagation approaches such as CROWN~\cite{zhang_weng_chen_hsieh_daniel_2018} and DeepPoly~\cite{Singh+Others/2019/Beyond} require a further modification to propagate these per-unit error intervals alongside the usual activation bounds. This is not conceptually difficult, but it does require implementation changes rather than a direct as-is application of these tools.

\section{Results}\label{sec:results}
Our framework applies to any activation function satisfying Assumption~\ref{assum:psi}, which covers B-splines (and hence deep-spline networks), Pad\'e approximations, Chebyshev polynomials, Gaussian processes, and Fourier-series activations, in addition to fixed activations such as tanh, sigmoid, and gelu. We perform our empirical evaluation on KAN networks~\cite{li2024kolmogorovarnold} specifically because each unit in a KAN carries its own independently learned activation function, unlike a standard feedforward network that reuses a single fixed nonlinearity (e.g., tanh or gelu) at every node. KANs use learnable representations at each node, meaning each activation function is  non-linear. Crucially, the functions differ across units and  thus a single abstraction for the entire network is not feasible.  This makes KANs a natural test for our method: every unit demands its own abstraction, and the choice of how to approximate each unit becomes a core decision problem.

\begin{table}[t]
\centering
\caption{Benchmark suite: KAN architectures and trainable parameter counts.
Architecture is given as $[n, m_1, \ldots, m]$ where $n$ is the input
dimension, $m_1,\ldots$ are hidden layer sizes, and $m$ is the output
dimension.}
\label{tab:benchmarks_wide}
\begin{tabular}{@{}lcc@{\hskip 4pt}|@{\hskip 4pt}lcc@{}}
\toprule
\textbf{ID} & \textbf{Arch.} & \textbf{Params} & \textbf{ID} & \textbf{Arch.} & \textbf{Params} \\
\midrule
fbessel      & [1, 1]     & 20    & fexp4      & [4, 4, 2, 1]  & 260   \\
fexp         & [2, 1, 1]     & 30    & pinnheat   & [2, 5, 1]     & 420   \\
fxy          & [2, 2, 1]     & 60    & fexp100    & [100, 1, 1]   & 1010  \\
fellipkinc   & [2, 2, 1, 1]  & 70    & fnoise     & [5, 12, 1]    & 1312  \\
fellipeinc   & [2, 2, 1, 1]  & 70    & prosthetic & [50, 8, 4, 5] & 3.6K  \\
flegendre    & [1, 5, 1]     & 100   & weather    & [168, 16, 6]  & 22.2K \\
fsphharm     & [2, 3, 2, 1]  & 140   & acopf\_14  & [22, 32, 186] & 53.2K \\
\bottomrule
\end{tabular}
\end{table}

The KANs used in our analysis are trained using the FastKAN library~\cite{li2024kolmogorovarnold}.
Specifically, FastKAN replaces the B-spline basis functions of the original KAN with radial basis functions (RBFs), reducing computation time with negligible accuracy loss. Each activation function is a learned weighted sum of Gaussian RBFs, with the weights optimized independently per unit during training so each unit in the network carries a distinct nonlinear activation function. However, these functions satisfy Assumption~\ref{assum:psi}. {In the implementation, each spline is given in terms of the value at a densely sampled grid of points, $1000$ uniformly spaced grid points. We did not attempt to implement the correction for the error between the discretized spline and the underlying continuous function alluded to in Section~\ref{sec:optimal-pwa-univariate-fun}, since it was negligible for the cases that we examined manually given the dense set of samples.} 
We use an initial interval bound propagation to fix the domain of each activation function. 

\begin{figure*}[ht!]
    \centering
    \includegraphics[width=1\linewidth, trim=6 6 6 6, clip]{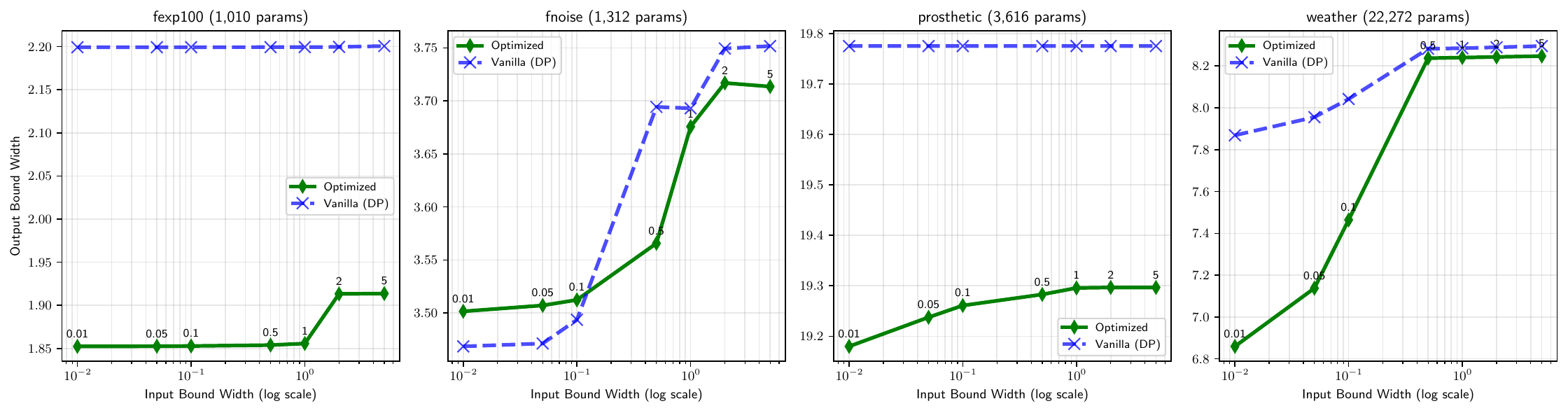}
    \caption{Output bound width as a function of input perturbation radius $r$ over the four largest benchmarks. \emph{Optimized} consistently achieves tighter bounds (smaller width is considered better) compared to the \emph{Vanilla} across all input widths. 
    }\label{fig:iw_width}
\end{figure*}

\begin{figure*}[ht!]
    \centering
    \includegraphics[width=1\linewidth, trim=6 6 6 6, clip]{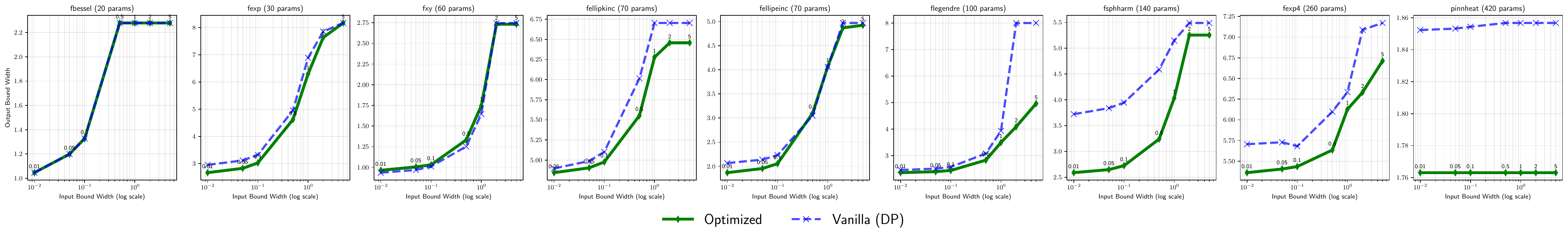}
    \caption{Output bound width as a function of input perturbation radius $r$  for the smaller (additional) benchmarks arranged left to right according to increased parameter sizes. Solid green line reports the output bound widths of the \textit{Optimized} method (smaller is better) while dotted blue line reports output bound widths for \textit{Vanilla} method. 
    }\label{fig:input_width_all}
\end{figure*}

The benchmarks are taken from a variety of sources including the KANs used for function approximations from Liu et al. (\texttt{fbessel}, \texttt{fexp100},..., \texttt{fellipkinc})~\cite{Liu+Others/2025/KAN}, KANs trained on existing datasets (\texttt{prosthetic}, \texttt{weather}, \texttt{acopf\_14}) and KAN model that solves differential equations such as the heat equation (\texttt{pinnheat}). A detailed description is available \ifextendedversion in Appendix~\ref{app:more-benchmarks}\else as part of the extended version of this paper\fi.  



We consider two approaches: (a) \textbf{Vanilla}, the baseline method that uses a fixed number of segments $k$ for each unit and uses Bellman's algorithm to compute PWA at each node; and (b) \textbf{Optimized}, which solves a knapsack problem for total segment budget $B = k \cdot |\mathcal{S}|$ across splines by solving an ILP, where $|\mathcal{S}|$ is the total number of splines in the network. Both approaches use the same total number of pieces: uniformly for \textit{Vanilla}, and via the piece-budget-constrained ILP of Section~\ref{sec:optimalkan-abs} for \textit{Optimized}. Besides these methods, we also compare against the bounds discovered by running an interval bound propagation algorithm (IBP) in Tab.~\ref{tab:ibp-combined}. The \textit{Optimized} pipeline executes four stages for each verification: 
\begin{compactenum}
\item For each activation function $\psi_{i,j}$, we run Bellman's DP to obtain a trade-off curve relating piece count to local error $e_{i,j}$. The DP table at each node is computed for the number of segments ranging from $1$ until $\max(S_{\max}, 2k)$, where $S_{\max} = 16$ is the default maximum; the $2k$ term ensures the table extends beyond the uniform budget when $k$ is large. The \textit{Vanilla} method, on the other hand, simply uses a single fixed $k$ for all splines. 
\item We estimate each activation function's Lipschitz constant to obtain  the sensitivity weights following the approach outlined in the proof of Theorem~\ref{thm:kan-overall-error}. 
\item We encode the multi-choice knapsack as an $0-1$ integer program that finds an optimal allocation of pieces summing up to the total number of pieces to be at most $k \cdot |\cal{S}| $, while minimizing the worst case error estimate.

\end{compactenum}  

In our evaluation, we consider input ranges that are $\ell_\infty$-balls given by $\mathcal{B}_\infty( r) = \{\mathbf{x} \mid \|\mathbf{x} \|_\infty \leq r\}$ with radii $r \in \{0.01, 0.05, 0.1, 0.5, 1.0, 2.0, 5.0\}$ and we sweep over segment counts $k \in \{2, \ldots, 7\}$.  We compare the output bound widths (upper bound minus lower bound) and when the network has multiple outputs, we compute the largest width over the first $5$ outputs as a comparison measure. Running times are measured in seconds: we separate the ``overhead time" of computing the optimal allocation and the time for solving the MILP. The rationale for doing so is that the DP is constructed once for the entire network and amortized across multiple verification tasks.  Our implementation is based on Python 3.13 and uses {GPU-accelerated} Gurobi as the MILP and ILP solver~\cite{gurobi}. All experiments are run with a fixed input domain centered at the origin and with fixed random seeds for reproducibility. We report a single run per benchmark rather than averaging over seeds, since the only randomness is in the initial network training and both the DP and ILP allocation steps are deterministic given a trained network. {We run all the experiments on Nvidia GB10 Blackwell GPUs consisting of 128 GB unified memory}. The timings reported here may vary  from run to run.

We note that the \texttt{acopf\_14} benchmark timed out (timeout = 1000 seconds per MILP instance) for all but the smallest segment budgets for both the Vanilla and Optimized methods. We do not include this benchmark in our analysis but note that it is too large for our current MILP solvers to handle within the designated timeout.

\textbf{Overarching questions:}
We now describe the evaluation method on the benchmarks and ask the following two overarching questions: \emph{(1) Does the Optimized approach yield tighter output bounds than the Vanilla baseline at equal segment budgets? What happens as the segment budget increases?} \emph{(2) Does our approach scale as model parameters grow?}

\paragraph{\textbf{Q1: Bound Tightness --- Optimized vs.\ Uniform Allocation}}

\begin{figure*}[ht!]
    \centering
    \includegraphics[width=1\linewidth, trim=6 6 6 6, clip]{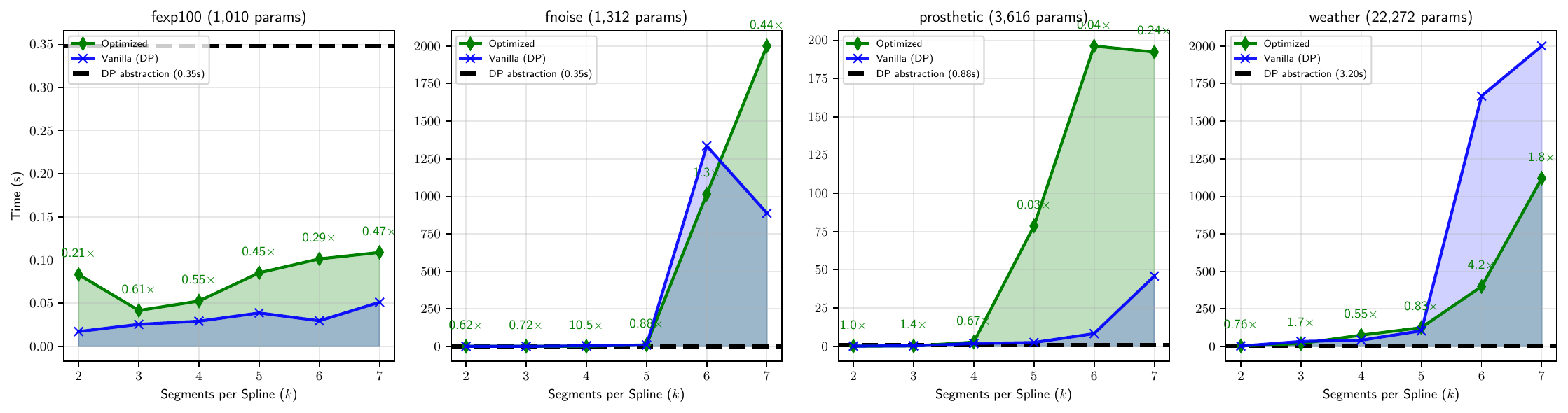}
        \caption{End-to-end runtime as a function of k for the four largest benchmarks. \textit{Optimized}'s per-query cost is the sum of the one-time DP abstraction (black dotted line) and the downstream MILP (green); \textit{Vanilla} incurs only the MILP cost (blue). The multipliers on the green curve show the ratio of \textit{Vanilla} MILP-stage runtime/\textit{Optimized} MILP-stage runtime. Ratio $> 1$ indicates speedup and $<1$ indicates slowdown.}\label{fig:timing}
\end{figure*}

\begin{figure*}[ht!]
    \centering
    \includegraphics[width=1\linewidth, trim=6 6 6 6, clip]{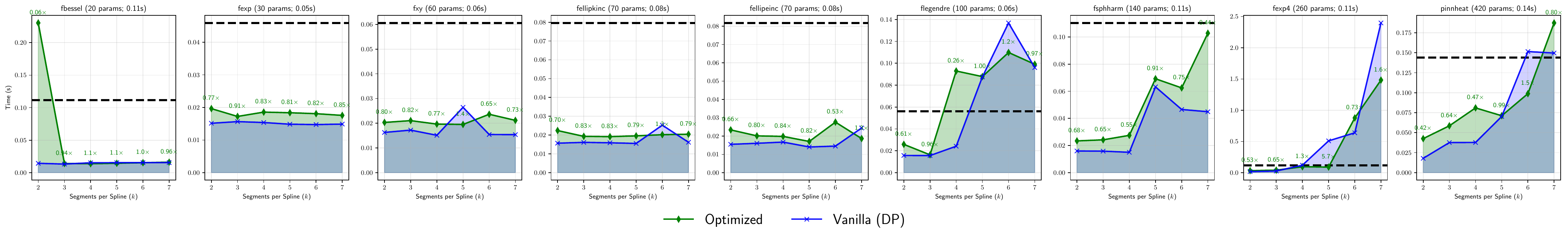}
    \caption{End-to-end runtime as a function of $k$ for the additional smaller benchmarks arranged left to right according to increased parameter sizes. \textit{Optimized}'s per-query cost is the sum of the one-time DP abstraction (black dotted line, with value in the title) and the downstream MILP (green); \textit{Vanilla} incurs only the MILP cost (blue).}\label{fig:timing_small}
\end{figure*}

By varying the input perturbation radius $r$, while  fixing the segment budget $k=5$, we can evaluate the ablative tightness of the proposed method.  Figure~\ref{fig:iw_width} evaluates robustness under varying input perturbation radii over four of the largest benchmarks while Figure~\ref{fig:input_width_all} shows this over the remaining $9$ smaller benchmarks. The comparisons demonstrate that the \textit{Optimized} approach consistently obtains bounds with smaller widths than the \textit{Vanilla}. Also, while the gains vary considerably, they are more pronounced for the larger benchmarks than for the smaller ones. Furthermore, the gains are also more prominent over smaller input widths than larger ones, with a few exceptions including the \texttt{fnoise} and \texttt{flegendre} benchmarks. Note that the parameter size is a poor proxy for the overall complexity of the network, which needs to account for the nature of the splines, and how errors made in the earlier hidden layers propagate to the output. These are network specific attributes that are hard to summarize in a succinct manner. 



 Table~\ref{tab:ibp-combined} compares the output bound widths obtained by the \emph{Optimized} and \emph{Vanilla} approaches against a standard interval bound propagation (IBP) fixing the average number of segments per spline $k=5$ and for two different input bound radii $r = 0.01, 0.1$.  The \textit{Optimized} method outperforms the other approaches in terms of the output bound widths across a vast majority of the benchmark instances considered.

\begin{table}[t]
\centering
\caption{Output bound widths from interval bound propagation (IBP) compared with
\textit{Vanilla} (Van.) and \textit{Optimized} (Opt.) methods, obtained by fixing $k=5$, for two
values of the input width $r$. Benchmarks are sorted by increasing trainable
parameter count. The smallest (best) bound width within each $r$ group is shown
in \textbf{bold}.}
\label{tab:ibp-combined}
\begin{tabular}{lcccccc}
\toprule
& \multicolumn{3}{c}{$r=0.01$} & \multicolumn{3}{c}{$r=0.1$} \\
\cmidrule(lr){2-4} \cmidrule(lr){5-7}
ID & IBP & Van. & Opt.  & IBP & Van. & Opt. \\
\midrule
fbessel    & \textbf{1.046} & \textbf{1.046} & \textbf{1.046} & \textbf{1.327} & \textbf{1.327} & \textbf{1.327} \\
fexp       & 2.958 & 2.958 & \textbf{2.672} & 3.325 & 3.325 & \textbf{3.030} \\
fxy        & 0.937 & \textbf{0.936} & 0.964 & 1.019 & \textbf{1.011} & 1.031 \\
fellipeinc & 2.066 & 2.066 & \textbf{1.596} & 2.230 & 2.230 & \textbf{1.747} \\
fellipkinc & 4.896 & 4.896 & \textbf{4.844} & 5.102 & 5.102 & \textbf{4.976} \\
flegendre  & 2.461 & 2.456 & \textbf{2.045} & 2.639 & 2.571 & \textbf{2.109} \\
fsphharm   & 3.722 & 3.722 & \textbf{2.573} & 3.947 & 3.946 & \textbf{2.691} \\
fexp4      & 6.485 & 5.705 & \textbf{5.373} & 6.563 & 5.680 & \textbf{5.419} \\
pinnheat   & 1.852 & 1.852 & \textbf{1.814} & 1.854 & 1.854 & \textbf{1.814} \\
fexp100    & 0.260 & 0.259 & \textbf{0.235} & 0.283 & 0.282 & \textbf{0.254} \\
fnoise     & 2.081 & 1.994 & \textbf{1.918} & 2.172 & 2.071 & \textbf{1.979} \\
prosthetic & 0.752 & \textbf{0.748} & 0.792 & 1.009 & \textbf{0.989} & 1.018 \\
weather    & 0.715 & 0.715 & \textbf{0.511} & 0.877 & 0.877 & \textbf{0.691} \\
\bottomrule
\end{tabular}
\end{table}

\begin{figure*}[ht!]
    \centering
    \includegraphics[width=1\linewidth, trim=6 6 6 6, clip]{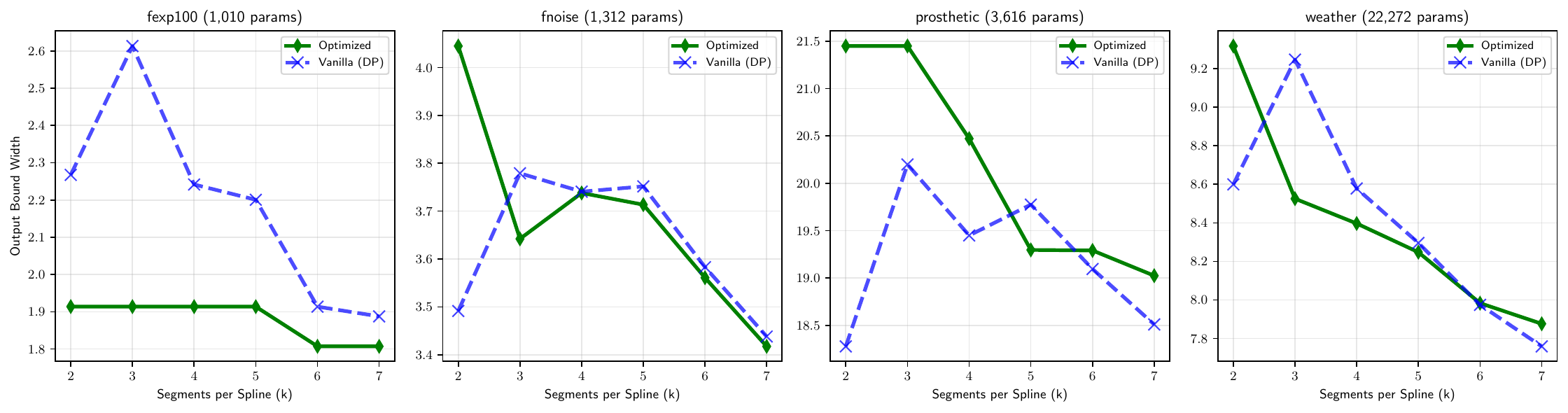}
    \caption{Output bound width as a function of per-spline segment budget $k$ for fixed input bound width $r=5$.  The solid green curve plots the trend for the \textit{Optimized} method while the dashed blue curve shows that for the \textit{Vanilla} method.}\label{fig:seg-vs-output-bound-width}
\end{figure*}

As a related comparison, Figure~\ref{fig:seg-vs-output-bound-width} shows the output bound width as a function of the segment budget $k$ for the four largest benchmarks. Although the \textit{Optimized} method performs better, there are a few instances where the \textit{Vanilla} approach performs better. We also note that while the optimized approach generally shows a tightening of the bound widths as the number of segments increases, the \textit{Vanilla} method does not show this trend in many of the benchmarks. Indeed, a systematic analysis using dynamic programming allocates more segments to splines that have a higher impact on the output bound, which in turn shows a monotone behavior of output bound width versus number of segments. This is not true, in general, for the \emph{Vanilla} method which allocates the same number of segments to all the splines.

\paragraph{\textbf{Q2: Computation Time Comparison}}

Figure~\ref{fig:timing} reports end-to-end verification time as a function of the segment budget for the four largest benchmarks. The dynamic programming time dominates the MILP times for the smallest benchmarks. On the other hand for the largest three benchmarks, it is negligible compared to the time taken by the MILP stages. Also, for applications that involve multiple verification queries over the same network~\cite{SinghEtAl2019AbstractDomain}, it is possible to amortize the one time cost of a DP run against multiple runs of the MILP. 
This is important because a single abstraction of the network can be reused across an unlimited number of verification queries such as varying input radius, target output property, or input feature for sensitivity analysis without re-running the DP.  Also, note that the DP partly parallelizes across splines, since the trade-off table can be computed for each individual spline, in parallel. 

As such, the number of binary variables in the MILP encoding is linear in $ k |\cal{S}|$, and  is the key driver of its worst-case complexity.   Since this is held to be the same across the \textit{Vanilla} and \textit{Optimized} methods, we do not expect to see a marked advantage for either method. However, the actual run times can be variable since MILP solvers involve a large number of heuristics and optimizations that can exploit particular properties of problem instances. In general, this expected behavior is largely borne out in the comparisons shown in Figures~\ref{fig:timing} and~\ref{fig:timing_small}. The key takeaway is that the overhead to compute the optimal abstractions remains within the same order of magnitude for small benchmarks and is completely dominated by the MILP times for the larger ones. 

\paragraph{Discussion} Our experiments on KANs spanning $20$ to $22K$ parameters demonstrate the value of the overall abstraction scheme. On most benchmarks the approach yields tighter output bound widths while remaining comparable in terms of running time. Also, the dynamic programming algorithm used to compute the optimized abstraction can be partly parallelized, yielding speedups and reducing the overhead of the initial abstraction phase considerably. 

Beyond IBP, no existing verification technique can be applied as is to the networks studied in this paper without first applying the abstraction techniques discussed in this paper. Although the work of Shi et al.~\cite{Shi+Others/2025/Neural} is closely related, their approach treats a fixed library of activation functions and fixes a set of breakpoints \emph{a priori}, as mentioned earlier in Section~\ref{sec:related}. Furthermore, although our PWA abstractions connect to many of the existing neural network verification approaches, especially through the translation through ReLU networks enabled by Theorem~\ref{thm:relu-encoding}, we still need to modify some of these approaches to handle the error between the original and PWA activation functions. Though conceptually simple, such a modification will need implementation changes. As a result, further benchmarking of our approach will be performed as part of our future work. 

We also note that the error propagation bound of Theorem~\ref{thm:kan-overall-error} is a worst-case sensitivity bound, and it may not capture the ``average-case'' or ``evaluation-time'' sensitivity of how the local PWA error at a single spline affects the overall output error. Investigating techniques to capture this average-case sensitivity is an important direction for future work. That said, for the knapsack optimization to succeed, it is more important to correctly capture the \emph{relative} weighting of errors across nodes than to obtain a precise absolute estimate. In this sense, our results, where the \emph{Optimized} allocation consistently outperforms uniform \emph{Vanilla} allocation, suggest that the worst-case sensitivity weights are already informative enough to guide the DP-based optimization toward a better abstraction.
\section{Conclusion}
We have presented a general framework for the range verification of neural networks with nonlinear and learnable activation functions by treating the choice of piecewise affine abstraction as an optimization problem. We bound network-wide error as a weighted linear combination of local errors and cast the problem of allocating pieces across units as a multi-choice knapsack problem. A key property of the framework is that the abstraction is computed \emph{once per network} and reused across any number of verification queries. This amortization is precisely what makes the upfront cost of optimized allocation worthwhile in practice, and it matches the structure of real certification pipelines, which issue many queries per model~\cite{SinghEtAl2019AbstractDomain}.

Future work will focus on reducing abstraction overhead for smaller networks, for instance via lazy or incremental DP, {validating the surrogate directly against a sampling-based estimate of the true error on our benchmarks,} and incorporating {property- and} MILP complexity estimates directly into the allocation objective to further improve end-to-end solver performance.

\paragraph*{Acknowledgments}
The authors thank the anonymous reviewers for their comments. This work was supported by the US National Science Foundation under award numbers 2422136 and 2534982. All opinions expressed here are those of the authors and not necessarily of the National Science Foundation. 


\balance
\bibliographystyle{IEEEtran}
\bibliography{bare_conf}

\ifextendedversion
\newpage
\onecolumn
\appendices
\section{Discretization Error}\label{app:discretization-correction-proof}
\begin{restatable}{theorem}{discretizationthm}

\label{thm:discretization-correction}
	Consider any interval $[\nn(j_1), \nn(j_2)]$ for $ 0 \leq j_1 < j_2 \leq j_{\max}$ and let $e_{j_1, j_2}$ denote the error between
	$\psi$ and $\psihat$ over all the discretization points in the interval $[\nn(j_1), \nn(j_2)]$. Also, let $c_{\max}$ be the
	maximum absolute value of the slope of any piece of $\psihat$ over the interval
	\begin{equation*}
	    e_{j_1, j_2} \leq \max_{z \in [\nn(j_1), \nn(j_2)]} | \psi(z) - \psihat(z) | \leq e_{j_1, j_2} + (c_{\max} + \psi'_{\max}) \Delta\,
	\end{equation*}
	wherein $\psi'_{\max} = \max_{z \in [\nn(j_1), \nn(j_2)]} \left| \frac{d \psi}{dz} \right| $.
\end{restatable}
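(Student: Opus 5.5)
The lower bound is immediate. The discretization points $\nn(j_1), \nn(j_1+1), \ldots, \nn(j_2)$ all lie in $[\nn(j_1), \nn(j_2)]$. Hence the maximum of $|\psi - \psihat|$ over the continuum is at least its maximum over these finitely many points, which is $e_{j_1, j_2}$ by definition.

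For the upper bound, I will set $g(z) = \psi(z) - \psihat(z)$ and fix an arbitrary $z \in [\nn(j_1), \nn(j_2)]$. Since the grid has spacing $\Delta$, there is an index $j$ with $j_1 \leq j \leq j_2$ such that $|z - \nn(j)| \leq \Delta$. Taking the nearest grid point even gives $\Delta/2$, but $\Delta$ suffices. By the triangle inequality,
\[ |g(z)| \leq |g(\nn(j))| + |\psi(z) - \psi(\nn(j))| + |\psihat(z) - \psihat(\nn(j))|. \]
The first term is at most $e_{j_1,j_2}$. The second term is at most $\psi'_{\max}\,|z - \nn(j)| \leq \psi'_{\max}\Delta$. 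This follows from the mean value theorem, using the differentiability of $\psi$ from Assumption~\ref{assum:psi}; the intermediate point lies between $z$ and $\nn(j)$, so it stays inside $[\nn(j_1),\nn(j_2)]$ where $\psi'_{\max}$ is taken.

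The third term needs the Lipschitz property of $\psihat$, and I expect this to be the only step requiring care. The function $\psihat$ is continuous and piecewise affine (Def.~\ref{def:cpwa-function}), and the segment between $z$ and $\nn(j)$ may cross one or more breakpoints of $\psihat$. I will handle this by splitting that segment at the breakpoints it contains. On each sub-segment $\psihat$ is affine with slope of magnitude at most $c_{\max}$. Continuity of $\psihat$ lets the increments telescope, so $|\psihat(z) - \psihat(\nn(j))| \leq c_{\max}|z - \nn(j)| \leq c_{\max}\Delta$. This works because $c_{\max}$ is defined as the maximum absolute slope over every piece of $\psihat$ meeting the interval, which includes every piece the sub-segments use.

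Summing the three bounds gives $|g(z)| \leq e_{j_1,j_2} + (c_{\max} + \psi'_{\max})\Delta$ for every $z$ in the interval. Taking the maximum over $z$ yields the claimed upper bound.
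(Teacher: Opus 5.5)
Your proof is correct and follows essentially the same route as the paper. The paper proves a per-cell lemma on $[\nn(j),\nn(j+1)]$ using the same three-term triangle inequality (mean value theorem for $\psi$, slope bound for the affine piece of $\psihat$), applies it to every cell between $j_1$ and $j_2$, and does not address the lower bound. Your version adds the trivial lower bound and, by telescoping across breakpoints, drops the paper's implicit assumption that $\psihat$ is affine on each grid cell (which holds for Bellman's output, since its breakpoints are grid points).
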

We start by proving a useful lemma that derives an error bound for an interval between two consecutive grid points $[\nn(j), \nn(j+1)]$ for $1 \leq j < j_{\max}$.
\begin{lemma}\label{Lemma:useful-lemma-1}
    Let $e_j = | \psi(\nn(j)) - \psihat(\nn(j)) |$ for some index $j \in [1, j_{\max})$. For any $z \in [\nn(j), \nn(j+1)]$, the error $ | \psi(z) - \psihat(z) | \leq e_j + \left( \psi'_{\max} + |c| \right) \Delta $, wherein $\psi'_{\max} = \max_{z \in [\nn(j), \nn(j+1))} \left| \frac{d \psi}{dz} \right|$ and $ \psihat(z) = c z + d$ for $z \in  [\nn(j), \nn(j+1)]$
\end{lemma}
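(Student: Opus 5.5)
The plan is to anchor the error at the left grid point $\nn(j)$ and control how much both $\psi$ and $\psihat$ can drift over a single grid cell of width $\Delta$.

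Fix $z \in [\nn(j), \nn(j+1)]$ and add and subtract the values at the grid point:
\[
\psi(z) - \psihat(z) = \bigl(\psi(z) - \psi(\nn(j))\bigr) + \bigl(\psi(\nn(j)) - \psihat(\nn(j))\bigr) + \bigl(\psihat(\nn(j)) - \psihat(z)\bigr).
\]
Applying the triangle inequality splits the left side into three terms, and each is bounded separately.

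\begin{itemize}
\item The middle term equals $e_j$ by definition.
\item The first term is handled with the mean value theorem, which applies because $\psi$ is continuous and differentiable (Assumption~\ref{assum:psi}). It gives a point $\xi$ between $\nn(j)$ and $z$ with $|\psi(z) - \psi(\nn(j))| = |\psi'(\xi)|\,|z - \nn(j)|$. Since $|z - \nn(j)| \leq \Delta$ and $|\psi'(\xi)| \leq \psi'_{\max}$, this term is at most $\psi'_{\max}\Delta$.
\item The third term uses that $\psihat(z) = cz + d$ is affine on the cell. Hence $|\psihat(\nn(j)) - \psihat(z)| = |c|\,|z - \nn(j)| \leq |c|\Delta$.
\end{itemize}

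Summing the three bounds gives $e_j + (\psi'_{\max} + |c|)\Delta$, which is the claim.

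The only delicate point is the mean value step. The lemma defines $\psi'_{\max}$ over the half-open interval $[\nn(j), \nn(j+1))$. When $z < \nn(j+1)$, the point $\xi$ lies in the open interval between $\nn(j)$ and $z$, so it falls inside that half-open set and the bound applies directly. For the endpoint $z = \nn(j+1)$, I would argue by continuity: take the limit of the bound as $z \to \nn(j+1)$ from the left. Alternatively, one can read $\psi'_{\max}$ as a supremum over the cell, which can be computed via \textsc{DerivativeInterval}.

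A second caveat is that the lemma's hypothesis assumes $\psihat$ is affine on the whole cell, i.e.\ no breakpoint lies strictly inside it. This holds automatically because Bellman's algorithm places breakpoints only at grid points. This lemma then yields Theorem~\ref{thm:discretization-correction} once we take the maximum over all cells in $[\nn(j_1),\nn(j_2)]$.
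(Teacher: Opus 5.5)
Your proof is correct and matches the paper's argument: the same add-and-subtract at $\nn(j)$ with the triangle inequality, the mean value theorem for the $\psi$ term, and the affine form of $\psihat$ for the last term. Your endpoint worry is unnecessary, because the mean value point always lies in the open interval $(\nn(j), z) \subseteq [\nn(j), \nn(j+1))$, even when $z = \nn(j+1)$, so the continuity argument is not needed.
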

\begin{proof}
    Consider any $z \in [\nn(j), \nn(j+1)]$. Note that by the mean value theorem: $\psi(z) = \psi(\nn(j)) + \psi'(z^*) (z - \nn(j))$ for some $z^* \in [\nn(j), z]$. Therefore, we have, $ | \psi(z) - \psi(\nn(j))| \leq | \psi'(z^*)| \times  | z - \nn(j)| \leq \psi'_{\max} \Delta$ which leads us to:
    \begin{align*}
        | \psi(z) - \psihat(z) | 
            & \leq | \psi(z) - \psi(\nn(j)) + \psi(\nn(j) - \psihat(\nn(j)) + \psihat(\nn(j)) - \psihat(z) | \\
		& \leq | \psi(z) - \psi(\nn(j)) | + | \psi(\nn(j) - \psihat(\nn(j)) | + |  \psihat(\nn(j)) - \psihat(z) | \\
            & \leq \psi'_{\max} \Delta + e_j + | c (z - \nn(j)) |\;\; \leq \;\; e_j + \left( \psi'_{\max} + |c| \right) \Delta \qedhere
	\end{align*}
\end{proof}
Note that $\psi'_{\max}$ can be computed using the function \textsc{DerivativeInterval} assumed in Assumption~\ref{assum:psi}. Now, we can use Lemma~\ref{Lemma:useful-lemma-1} to derive a correction for $\mathsf{singlePieceError}$ using the following theorem:

\discretizationthm*
\begin{proof}
        Follows directly from applying Lemma~\ref{Lemma:useful-lemma-1} over interval $[\nn(j), \nn(j+1)]$ for $ j_1 \leq j < j_2$. 
\end{proof}

\section{Proof of Theorem~\ref{theorem:moknapsack-npcomplete}} \label{appendix:proof-theorem3}

\moknapsacknpcomplete*
\begin{proof}
    The decision Multi-Choice Knapsack (MCK) problem can be shown to be \textbf{NP}-complete by a reduction from the classical $0-1$ knapsack problem, which is known to be \textbf{NP}-complete (which in turn is reduced from the subset-sum problem).

    Given a candidate solution for the decision version of Problem~\ref{MOK-Problem} we can perform the following checks i.e. \begin{inparaenum}[(a)]
        \item does the total weight of the selected items fall within the budget specified $W$?
        \item does the total value of all items amount to at least the target value $V$? and
        \item is exactly one item selected from each option?
    \end{inparaenum}. We can verify all these conditions in $O(Nk)$ polynomial time, leading us to $\text{MCK} \in \textbf{NP}$.

    Now, we can reduce from the decision version of the $0-1$ knapsack problem, where we can set the problem instance as selecting a set of $N = m$ items each with weight $w_j \in \mathbb{N}$, value $v_j \in \mathbb{N}$, a weight bound $W \in \mathbb{N}$, and a maximum target value $V \in \mathbb{N}$. We need to find out: Is there a subset $S \subseteq \{1, \dots, m\}$ such that $\sum_{j \in S} w_j \leq W$ and $\sum_{j \in S} v_j \geq V$?

    Given such an instance, we can formulate the problem where, for each item $i$ in the $0-1$ knapsack problem, we can define an option $k$ with two items:
    \begin{compactenum}
        \item Item $I^k_{i,1}$ with weight $w^k_{i,1} = 0$ and value $v^k_{i,1} = 0$, corresponding to not selecting the item $j$
        \item Item $I^k_{i,2}$ with weight $w^k_{i,2} = w_j$ and value $v^k_{i,2} = v_j$, corresponding to selecting item $j$
    \end{compactenum}
    If the knapsack problem has a subset $S$ satisfying the constraints of the MCK problem, then for each $i$ choosing item $I^i_{i,2}$ if $i \in S$ and $I^i_{i,1}$ otherwise, will lead to a valid MCK solution with total weight $\leq W$ and value $\geq V$. Conversely, any feasible solution corresponds to a selection of items where each option contributes either no weight and value or the weight and value corresponding to the item selected. In that case, the set of indices $j$ where $I^j_{j,2}$ is selected forms a feasible solution to the $0-1$ knapsack problem.
    
    It is clear that this process of converting the problem is polynomial time in number of inputs $N$. Thus, we have shown that $0-1$ knapsack $\leq_p$ MCK.
\end{proof}

\section{MILP Encoding}\label{sec:milp-encoding}

We will recall the MILP encoding of ReLU networks and adapt it to our context where the PWA functions have a fixed error bound. It is convenient to view the network as a directed acyclic graph with three types of nodes:
\begin{compactenum}
\item Activation function nodes that are PWA functions $\psihat_{i,j}$ in the network with an error $e_{i,j}$.
\item Input nodes corresponding to an input $x_i$ of the network. These nodes do not have incoming edges.
\item Output nodes corresponding to an output $y_j$ of the network. These nodes do not have outgoing edges. 
\end{compactenum}

The edges in the networks are labeled with the weights.

For the encoding, we will have the following decision variables:
\begin{compactenum}
    \item We will introduce a continuous variable $x_i$ for each input variable in the network and $y_j$ for each output variable.
    \item For each activation function node $n_{i,j}$, we will introduce an output variable $z_{i,j}$ along with a set of temporary variables $\alpha_{i,j}$ representing its input, as well as continuous variables $t_{i,j}^{(0)}, \ldots, t_{i,j}^{(k-1)}$  and binary variables $w_{i,j}^{(0)}, \ldots, w_{i,j}^{(k-1)}$ for each ReLU term in the decomposition from Theorem~\ref{thm:relu-encoding}. Additionally, we introduce a variable $\epsilon_{i,j}$ to model the error.
\end{compactenum}

The \emph{node variable} is the corresponding input variable for an input node, its corresponding output variable for the output node, and the output variable $z_{i,j}$ for each activation function node.

We will now enumerate the constraints needed in the encoding.

\noindent\textbf{Input Constraints:} We add constraints of the form $\ell_i \leq x \leq u_i$ based on the input range to the verification problem.    

\noindent\textbf{Activation Function Constraints:} For each activation function node $n_{i,j}$, we first create an expression 
\[ \alpha_{i,j} = \sum_{ (n, n_{i,j}) \in \textsf{incoming}(n_{i,j})} \textsf{weight}(n, n_{i,j}) \times \textsf{var}(n) \,. \] 

Here $\textsf{incoming}(n)$ for a node $n$ refers to the incoming edges for that node, $\textsf{weight}(n, n_{i,j})$ refers to the edge weight for an edge $n \rightarrow n_{i,j}$ and $\textsf{var}(n)$ is the variable associated with a node $n$. This encodes the weighted combination of the values from the connected nodes that is input to the activation function node. 

Next, we consider the PWA function $\psihat(z)$ written out in terms of a weighted combination of ReLU functions as in Eq.~\eqref{eq:relu-equivalence} in Theorem~\ref{thm:relu-encoding}. Let $z_1, \ldots, z_{k-1}$ be the breakpoints. We will need to encode the following relu constraints: 

\[ t_{i,j}^{(0)} = \relu(z_1 - \alpha_{i,j}), t_{i,j}^{(1)} = \relu(\alpha_{i,j} -z_1), \ldots, t_{i,j}^{(k-1)} = \relu(\alpha_{i,j} -z_{k-1}) \,. \]

Each constraint of the form $t = \relu(\alpha - z_l) $ where $t = t_{i,j}^{(l)}$ and $\alpha = \alpha_{i,j}$ can be encoded as a series of linear inequalities involving a binary variable $w = w_{i,j}^{(l)}$ that is recalled below:
\[ t \geq \alpha, t \geq 0, t \leq \alpha + L_{i,j}(1 - w), t \leq L_{i,j} w \,, \]
wherein $\alpha_{i,j} \in [-L_{i,j}, L_{i,j}]$ are the propagated interval bounds on the inputs.

The key constraint relates the output variable of the neural network to the ReLU outputs and the error following Eq.~\eqref{eq:relu-equivalence}.
\[ z_{i,j} = \epsilon_{i,j} + \psihat(z_1) - m_0 t_{i,j}^{(0)} + m_1 t_{i,j}^{(1)} + \sum_{l=2}^{k-1} (m_l - m_{l-1}) t_{i,j}^{(l)} \,.\]

We add the constraint $-e_{i,j} \leq \epsilon_{i,j} \leq e_{i,j}$ to note that the error is bounded. 

To compute the range over an output $y_j$, we simply maximize the objective function $y_j$ to find the upper bound and minimize it to find the lower bound.

\section{Description of Benchmarks}\label{app:more-benchmarks}

\begin{enumerate}
    \item \emph{Small}: \texttt{fbessel}: $f(x) = \mathcal{J}_0(20 x)$ (1 input; 20 parameters; [1, 1] KAN). A highly oscillatory Bessel function of the first kind; the factor of $20$ compresses many oscillations into the unit interval, producing large, rapidly alternating curvature that challenges uniform PWA allocation.
 
    \item \emph{Small}: \texttt{fexp}: $f(x_1, x_2) = \exp(\sin(\pi x_1) + x_2^2)$ (2 inputs; 30 parameters; [2, 1, 1] KAN). A smooth exponential-of-additive function combining a univariate sine and a square term; it admits an exact 2-layer Kolmogorov--Arnold representation, making it a canonical sanity check for whether training recovers the true minimal compositional structure rather than a wider, over-parameterized fit.
 
    \item \emph{Medium}: \texttt{fexp4}: $f(x_1,x_2,x_3,x_4) = \exp\!\big(\tfrac{1}{2}(\sin(\pi(x_1^2 - x_2^2)) + \sin(\pi(x_3^2+x_4^2)))\big)$ (4 inputs; 260 parameters; [4, 4, 2, 1] KAN). A four-variable generalization combining two independent quadratic-difference and quadratic-sum sine interactions before an outer exponential; because each pairwise term must first be squared and summed, the function requires a deeper (3-layer) compositional structure than the two-variable benchmarks above, testing whether the allocator exploits multi-stage compositional sparsity.
 
    \item \emph{Small}: \texttt{fxy}: $f(x_1,x_2) = x_1 x_2$ (2 inputs; 60 parameters; [2, 2, 1] KAN). A pure multiplicative interaction; it is not additively separable and so cannot be represented by a depth-1 KAN, but admits the exact identity $2x_1x_2=(x_1+x_2)^2-(x_1^2+x_2^2)$, making it a minimal test of whether multiplication is recovered compositionally rather than approximated point-wise.
 
    \item \emph{Large}: \texttt{fexp100}: $f(x_1,\ldots,x_{100}) = \exp\!\big(\tfrac{1}{100}\sum_{i=1}^{100}\sin^2(\tfrac{\pi x_i}{2})\big)$ (100 inputs; 1010 parameters; [100, 1, 1] KAN). A 100-dimensional generalized-additive function; despite its high input dimensionality it depends on the inputs only through a symmetric sum of univariate terms, so a width-1 hidden layer suffices, making it the primary stress test of whether the method beats the curse of dimensionality on compositionally sparse functions.
 
    \item \emph{Medium}: \texttt{fnoise}: $f(x_1,\ldots,x_5) = 0.1\,x_1x_2 + 0.5\sin(x_3x_4) + \sin(x_5) + \mu$, with $\mu\sim\mathcal{N}(0,\sigma^2)$ and $\sigma=0.05$ (5 inputs; 1312 parameters; [5, 12, 1] KAN). A five-variable function combining a scaled multiplicative term, a sinusoidal product interaction, and a univariate sine, corrupted by additive Gaussian label noise; unlike the other benchmarks there is no noise-free ground truth to recover exactly, so this benchmark probes robustness to irreducible label noise rather than pure approximation error.
 
    \item \emph{Small}: \texttt{flegendre}: $f(x) = P_{\nu}^{m}(x)$, where $P_{\nu}^{m}(x) = (1-x^2)^{m/2}\,\frac{d^m}{dx^m}P_{\nu}(x)$ for integer $m\ge 0$ (1 input; 100 parameters; [1, 5, 1] KAN), with fixed $m=1$ and $\nu=3.0$. The five parallel splines in the hidden layer each receive the same scalar input but learn distinct nonlinear transformations (different rows of the associated Legendre matrix), so their Lipschitz constants vary --- giving the knapsack allocator meaningful differentiation signal despite the narrow single-hidden-layer structure.
 
    \item \emph{Small}: \texttt{fsphharm}: $f(x_1, x_2) = \Re\{Y_{n}^{m}(x_1,x_2)\}$, where $Y_{n}^{m}(\theta,\phi)=\sqrt{\frac{2n+1}{4\pi}\frac{(n-m)!}{(n+m)!}}\,P_{n}^{m}(\cos\phi)\,e^{im\theta}$ (2 inputs; 140 parameters; [2, 3, 2, 1] KAN), with fixed $m=1$, $n=1$. The real part of a spherical harmonic; the function is smooth globally but exhibits oscillatory angular variation whose sensitivity profile differs across the two input dimensions, providing a bivariate test of cross-path sensitivity estimation.
 
    \item \emph{Small}: \texttt{fellipeinc}: $f(x_1, x_2) = E(x_1\,|\,x_2)$, where $E(\phi\,|\,m)=\int_{0}^{\phi}\sqrt{1-m\sin^2\vartheta}\,d\vartheta$ (2 inputs; 70 parameters; [2, 2, 1, 1] KAN). An incomplete elliptic integral of the second kind; the integrand exhibits rapidly varying curvature as $m \to 1$, creating localized high-error regions where adaptive segment concentration is most beneficial.
 
    \item \emph{Medium}: \texttt{fellipkinc}: $f(x_1, x_2) = K(x_1\,|\,x_2)$, where $K(\phi\,|\,m)=\int_{0}^{\phi}\frac{d\vartheta}{\sqrt{1-m\sin^2\vartheta}}$ (2 inputs; 70 parameters; [2, 2, 1, 1] KAN). An incomplete elliptic integral of the first kind; the denominator approaches zero more aggressively than in \texttt{fellipeinc}, making this a harder approximation target with stronger curvature gradients.
 
    \item \emph{Medium}: \texttt{pinnheat}: solves the standard 2D Poisson/heat equation $u_{xx}+u_{yy}=f$ on $\Omega=[-1,1]^2$ with zero Dirichlet boundary data, where $f$ is chosen so the true solution is $u=\sin(\pi x)\sin(\pi y^2)$ (2 inputs $(x,y)$; 420 parameters; [2, 5, 1] KAN). Trained with a physics-informed loss (interior PDE residual plus boundary loss) rather than direct supervision on $u$, so the benchmark stresses a second-derivative loss landscape rather than pointwise regression.
 
    \item \emph{Large}: \texttt{acopf\_14}: learns the AC optimal power flow solution map for the IEEE 14-bus system, mapping bus-level active/reactive load inputs to bus voltages and branch power flows (22 inputs; 186 outputs; 53248 parameters; [22, 32, 186] KAN).
 
    \item \emph{Large}: \texttt{weather}: predicts short-term weather from a rolling window of hourly measurements, mapping one week of hourly features to a 6-hour-ahead forecast (168 inputs; 6 outputs; 22272 parameters; [168, 16, 6] KAN). A real-world multivariate time-series regression task with no known closed-form or compositional ground truth.
 
    \item \emph{Medium}: \texttt{prosthetic}: predicts knee abduction angle from a window of prosthetic sensor data, mapping the sensor window to predicted knee abduction angles at 5 successive time steps (50 inputs; 5 outputs; 3616 parameters; [50, 8, 4, 5] KAN). A real-world biomechanical time-series regression task, testing generalization on noisy sensor data rather than a synthetic ground-truth function.
\end{enumerate}

\fi 
\end{document}